\newcommand{\reminder}[1]{{\textsf{\small \textcolor{orange}  {[#1]}}}}
\newcommand{\sos}[1]{{\text{\textcolor{red}  {[#1]}}}}
\newcommand{\hide}[1]{}
\newcommand{\cleanup}{
    \renewcommand{\reminder}{\hide}
    \renewcommand{\sos}{\hide}
}
\DeclareMathOperator*{\argsup}{arg\,sup}
\DeclareMathOperator*{\arginf}{arg\,inf}
\DeclareMathOperator*{\diagonalize}{diag}
\DeclareMathOperator*{\trace}{Tr}
\DeclareMathOperator*{\rnk}{rank}
\DeclareMathOperator*{\nnzop}{nnz}
\newcommand{\mat}[1]{\mathbf{#1}}
\newcommand{\tensor}[1]{\bm {\mathcal{#1}}}
\newcommand{\diag}[1]{\diagonalize\left(#1\right)}  
\newcommand{\norm}[1]{\left|\left|#1\right|\right|}
\newcommand{\cp}[6]{\llbracket\mat{#1} {#4},\mat{#2} {#5},\mat{#3} {#6} \rrbracket}
\newcommand{\tr}[1]{\trace\left(#1\right)}
\newcommand{\myset}[1]{ {\mathcal{#1}}}
\newcommand{\rank}[1]{ \rnk\left({#1}\right)}
\newcommand{\nnz}[1]{\nnzop\left(#1\right)}
\newcommand{\method}[1]{GenClus#1\xspace}
\patchcmd{\hyper@makecurrent}{%
    \ifx\Hy@param\Hy@chapterstring
        \let\Hy@param\Hy@chapapp
    \fi
}{%
    \iftoggle{inappendix}{
        \@checkappendixparam{chapter}%
        \@checkappendixparam{section}%
        \@checkappendixparam{subsection}%
        \@checkappendixparam{subsubsection}%
        \@checkappendixparam{paragraph}%
        \@checkappendixparam{subparagraph}%
    }{}%
}{}{\errmessage{failed to patch}}
\newcommand*{\@checkappendixparam}[1]{%
    \def\@checkappendixparamtmp{#1}%
    \ifx\Hy@param\@checkappendixparamtmp
        \let\Hy@param\Hy@appendixstring
    \fi
}
\apptocmd{\appendix}{\toggletrue{inappendix}}{}{\errmessage{failed to patch}}
\apptocmd{\subappendices}{\toggletrue{inappendix}}{}{\errmessage{failed to patch}}
\providecommand{\keywords}[1]{\noindent{\small\textbf{\textit{Keywords---}} #1}}
\newcommand{\appendixcontent}{} 
\newcommand{\moveToAppendix}[2]{%
    \ifbool{movecontent}{%
        \gappto\appendixcontent{\subsection{#1}#2}%
    }{%
        #2%
    }%
}
\begin{document}

\newcommand\relatedversion{}


\title{\Large Multi-View Spectral Clustering for Graphs with Multiple View Structures
}
\author{
    Yorgos Tsitsikas\thanks{University of California, Riverside. \{gtsit001@ucr.edu, epapalex@cs.ucr.edu\}}
    \and
    Evangelos E. Papalexakis\footnotemark[1]
    }

\date{}

\maketitle







\begin{abstract}\small\baselineskip=9pt 
    Despite the fundamental importance of clustering, to this day, much of
    the relevant research is still based on ambiguous foundations, leading
    to an unclear understanding of whether or how the various clustering
    methods are connected with each other. In this work, we provide an
    additional stepping stone towards resolving such ambiguities by
    presenting a general clustering framework that subsumes a series of
    seemingly disparate clustering methods, including various methods
    belonging to the widely popular spectral clustering framework. In fact,
    the generality of the proposed framework is additionally capable of
    shedding light to the largely unexplored area of multi-view graphs where
    each view may have differently clustered nodes. In turn, we propose
    \method: a method that is simultaneously an instance of this framework
    and a generalization of spectral clustering, while also being closely
    related to k-means as well. This results in a principled alternative to
    the few existing methods studying this special type of multi-view
    graphs. Then, we conduct in-depth experiments, which demonstrate that
    \method{} is more computationally efficient than existing methods, while
    also attaining similar or better clustering performance. Lastly, a
    qualitative real-world case-study further demonstrates the ability of
    \method{} to produce meaningful clusterings.
\end{abstract}
\vspace{\baselineskip}

\keywords{
Multi-graph;
Multi-modal graph;
Multi-view graph;
Multi-layer graph;
Multi-aspect graph;
Multi-plex graph;
Spectral clustering;
Clustering;
Tensor Decomposition;
Tensor Factorization
}
\begin{refsection}
\section{Introduction.}
Theoretical and computational developments in linear algebra and graph
theory have set the foundations for the spectral clustering family
of methods
\cite{von2007tutorial}, \cite{ng2001spectral}, \cite{zhou2005learning}, \cite{klus2022koopman},
which has proven to be one of the most successful clustering paradigms.
However, although researchers have been defining novel and intricate graph
types that aim to capture more complex information compared to traditional
graphs \cite{magnani2021community}, extending spectral clustering to such
graphs is still not very well understood. Multi-view graphs are a popular
instance of such graphs, which consist of sets of simple graphs
called views. Each view consists of the same set of nodes, but a potentially different set
of edges. Prominent examples are time-evolving graphs
\cite{zhong2014evolution}, where different views emerge as edges appear
or disappear at different points in time, and multi-relational knowledge
graphs \cite{nickel2015review}, which describe the relationships between
entities, and where each relation type gives birth to a different view. The
main rationale for developing models for multi-view graphs is that they are 
often able to obtain embeddings that better capture the intricacies of the
graph structure as compared to considering individual views alone
\cite{papalexakis2013more}.  

\begin{figure}[h]
    \centering
    \includegraphics[width=\linewidth]{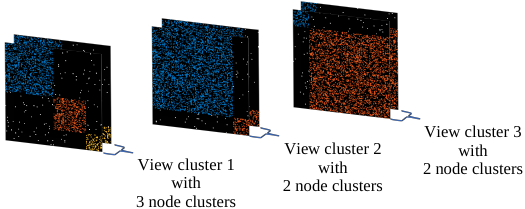}
    \caption{Example of multi-view graph with 6 views grouped into 3 view
    clusters, each corresponding to a different node clustering with
    3, 2 and 2 node clusters, respectively. Each view is visualized via its
    adjacency matrix and darker colors represent lower edge weights.}
    \label{fig:multi_structure_multi_relational_example}
\end{figure}
Note that, existing research on multi-view graphs has primarily focused on
graphs whose views are all part of the same underlying node clustering
structure \cite{zhou2007spectral}, \cite{liu2012multiview}.  However,
recently there have also been attempts to extend such methods to multi-view
graphs with multiple view structures \cite{ni2017comclus},
\cite{gujral2020beyond}, \cite{chen2019tensor}.  Multi-view graphs of this
type may contain multiple groups of views, each corresponding to a unique
clustering of nodes, as illustrated in
\autoref{fig:multi_structure_multi_relational_example}.  Also, although
there have been attempts to adapt spectral clustering to multi-view graphs
\cite{zhou2007spectral}, \cite{liu2012multiview}, to the best of our
knowledge no such extension exists for multi-view graphs with multiple view
structures. Additionally, since many relevant methods have been developed
independently of each other, it is not yet understood whether or how
they may be related to each other. To this end, in
this work we make the following contributions:
\begin{itemize}
    \item {\bf Unifying Clustering Framework.} We present a framework
        encapsulating a series of existing clustering methods able to model
        data as complex as multi-view graphs with multiple view structures.
    \item {\bf A Principled Multi-View Graph Clustering Method.} We propose
        \method{}, a novel multi-view clustering method for graphs with
        multiple view structures, which can be seen as both a special case
        of the aforementioned framework and a generalization of the celebrated spectral
        clustering family of methods. We also show that it is very closely
        connected to k-means \cite{lloyd1982least} as well. 
    \item {\bf In-Depth Experimentation\footnote{An implementation of \method{}, along with
        the experiment scripts, is available at
        \url{https://github.com/gtsitsik/genclus}}.} We design a series of
        experiments
        on both artificial and real-world data to assess the performance of
        \method{} both quantitatively and qualitatively. In-depth
        comparisons with other baselines are performed as well.
\end{itemize}

\section{Proposed Unifying Framework.}

\begin{table}
    \centering
    \footnotesize
    \begin{tabular}{ |c|l|  }
        \hline
        \textbf{Symbol} & \textbf{Definition}\\
        \hline
        $x$,$X$                         &   Scalar  \\ 
        $\mat{x}$                       &   Column vector\\  
        $\mat{X}$                       &   Matrix\\
        $\tensor{X}$                    &   Tensor\\
        $\myset{X}$                     &   Set\\ 
        $\norm{\cdot}$                  &   Frobenious norm\\
        $\nnz{\cdot}$                   &   Number of non-zero elements\\
        $\mat{I}$                       &   Identity matrix\\
        $\mat{0}$                       &   All zeros column vector/matrix/tensor\\
        $\mat{1}$                       &   All ones column vector/matrix/tensor\\
        $\diag{\mat{x}}$                &   Diagonal matrix with $\mat{x}$ as its diagonal\\
        $\diag{\mat{X}}$                &   Vector consisting of the diagonal elements of $\mat{X}$\\
        $\mat{D}_\mat{X}^{(k)}$         &   $\diag{\mat{X}_{k:}}$\\
        $\tr{\mat{X}}$                  &   Trace of $\mat{X}$\\
        $\mat{X}^T$                     &   Transpose of $\mat{X}$\\
        $\mat{x}\succeq\mat{y}$         &   $\forall i \ \mat{x}_i \geq \mat{y}_i$ \\
        $\cp{U}{V}{W}{}{}{}$            &   PARAFAC with factor matrices $\mat{U}$, $\mat{V}$ and $\mat{W}$\\
        $[\mat{U}, \mat{V}, \mat{W}]$   &   Horizontal concatenation of $\mat{U}$, $\mat{V}$ and $\mat{W}$\\
        $\odot$                         &   Column-wise Khatri–Rao product\\
        $\times_n$                      &   Mode-$n$ product\\
        $\mat{X}_{(n)}$                 &   Mode-$n$ matricization of  $\tensor{X}$ \reminder{mention type of matriciation}\\
        \hline
    \end{tabular}
    \caption{Table of Symbols. See \cite{kolda2006multilinear} for further details on tensor
    notation and operations.}
    \label{tb:symbols}
\end{table}

\newlength{\myheight}
\begin{figure*}
\centering
\footnotesize
\begin{forest}
baseline,
for tree=
    {grow'=0,
    parent anchor=east,
    child anchor=west,
    align=center,
    edge={->},
    l sep =55
    },
forked edges,
  arrow just/.style={
    tikz+={
      \node [align=center] at ([yshift=30pt].mid |- e) {#1};
    },
  },
  tikz+={\coordinate (e) at (current bounding box.north);}
[$\cp{U}{V}{AB}{}{}{}$\\{\scriptsize(proposed}\\{\scriptsize
    unifying}\\{\scriptsize framework)},before drawing tree={y+=15.5}
    [\textbf{ComClus} \cite{ni2017comclus},arrow
    just={\settoheight{\myheight}{\includegraphics[width=\textwidth/8]{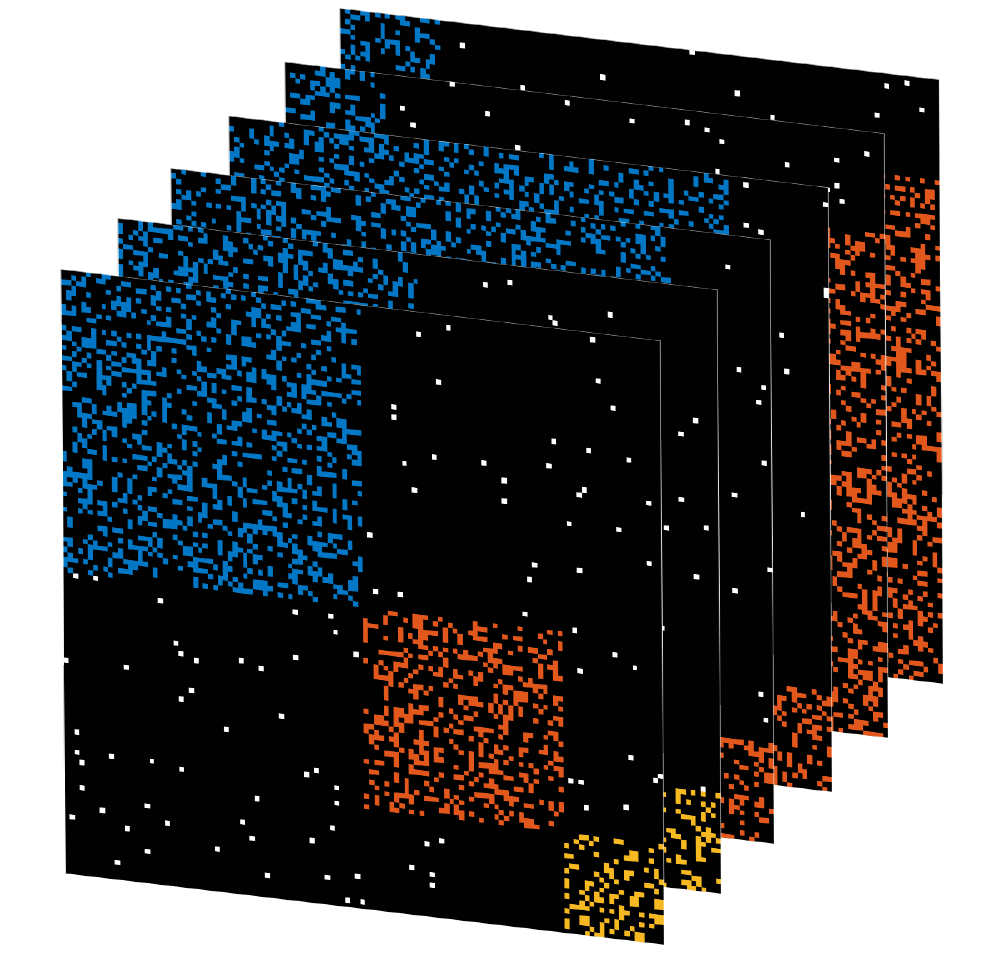}}\includegraphics[width=\textwidth/8]{Images/multi_structure_multi_relational_graph_2.pdf}\raisebox{0.4\myheight}{\fbox{\makecell{Multi-Structure
    \\ Multi-View}}}\hspace{\textwidth/8}},before drawing tree={y-=5}]
    [\textbf{\method{}}\\{\scriptsize(proposed method)}, 
    name=GC
        [\textbf{MC-TR-I-EVD} \cite{liu2012multiview},before drawing tree={y-=19},arrow
        just={\settoheight{\myheight}{\includegraphics[clip,trim=-50pt 0
        -50pt
        -53pt,width=\textwidth/9]{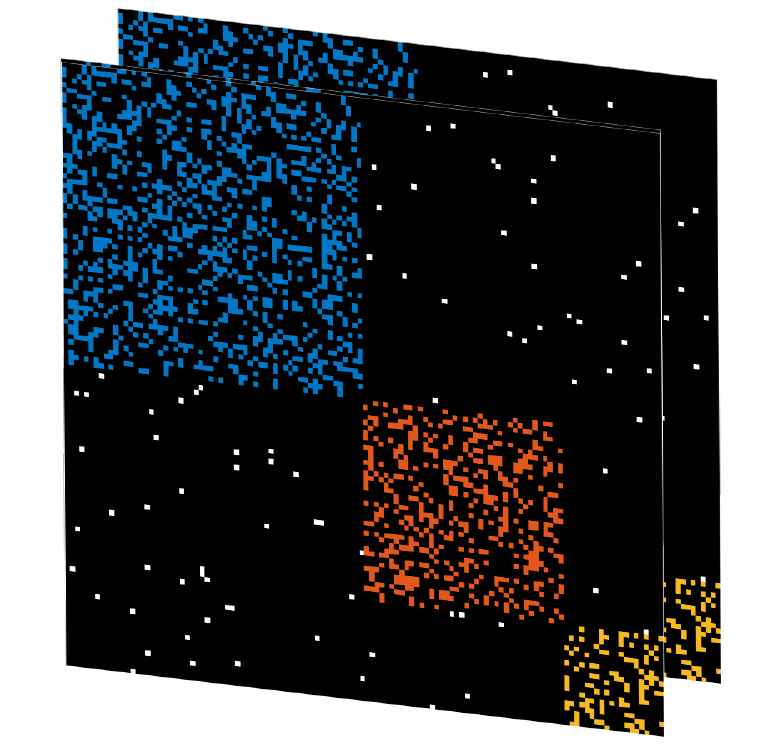}}\includegraphics[clip,trim=-50pt
        0 -50pt
        -53pt,width=\textwidth/9]{Images/multi_relational_graph_2.pdf}\raisebox{0.4\myheight}{\fbox{Multi-View}}\hspace{\textwidth/9}}, name=MSC1
            [\textbf{Spectral Clustering} \cite{ng2001spectral},before
            drawing tree={y-=41.2},no edge,arrow
            just={\settoheight{\myheight}{\includegraphics[clip,trim=-62pt 0 -62pt
            -67,width=\textwidth/9]{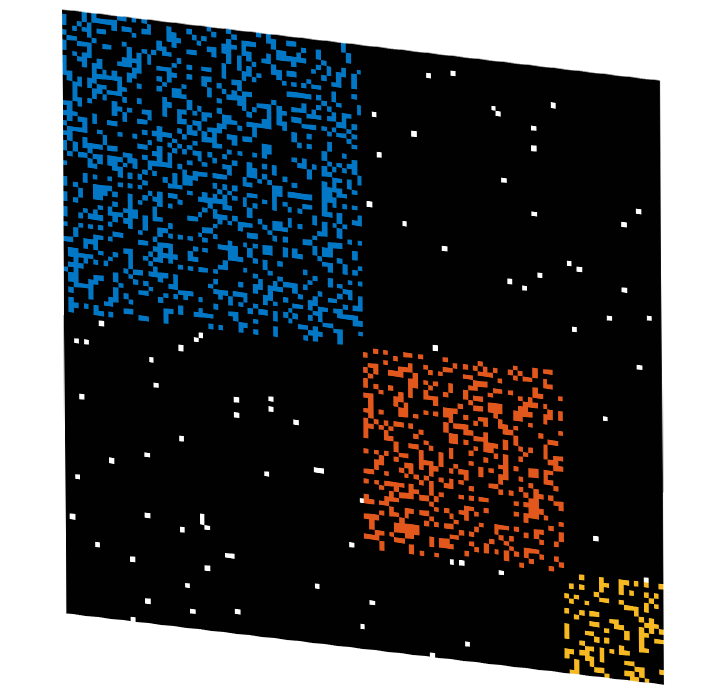}}\includegraphics[clip,trim=-62pt 0 -62pt
            -67,width=\textwidth/9]{Images/single_view_graph.pdf}\raisebox{0.4\myheight}{\fbox{Single-View}}\hspace{\textwidth/9}} ,name=SC,tier=singleview]
        ]
        [\textbf{MC-TR-I-EVDIT} \reminder{make it child of MC-TR-I-EVD} \cite{liu2012multiview},before drawing tree={y-=19.5},
        name=MSC2
            [,no edge,tier=singleview]
        ]
        [\textbf{\parbox{3cm}{\centering Co-regularized\\Spectral Clustering}} \cite{kumar2011co},before drawing tree={y-=16}, name=MSC3
        ]
        [\phantom{},no edge]
        [\textbf{k-means} \cite{lloyd1982least},before drawing tree={y-=15},tier=singleview] 
    ] 
    [\textbf{Richcom} \cite{gujral2020beyond},before drawing tree={y+=5}]
        [\textbf{CMNC} \cite{chen2019tensor},before drawing tree={y+=5}]
]
\draw[->] (MSC3.east) -- ($(SC.south west)!0.3!(SC.north west)$);
\draw[->] (MSC2.east) -- ($(SC.south west)!0.5!(SC.north west)$);
\draw[->] (MSC1.east) -- ($(SC.south west)!0.7!(SC.north west)$);
\end{forest}

\caption{For each arrow the method on its right can be seen as a special case of the method on its left. }
\label{fig:methodshierarchy}
\end{figure*}

Consider $K$ adjacency matrices, $\{\mat{X}^{(k)}\}_{k=1}^K$, where each is
of size $I \times I$ and  corresponds to a different view of a multi-view
graph. Additionally, assume that these views can be clustered into $M$ groups,
with each group corresponding to a different clustering of nodes. For
convenience, we will refer to such a node clustering structure as a
view structure. Given these, in this section we present an overview of
existing methods designed to model and extract such clusterings. Due to
space limitations, we will only discuss the embedding generation stage of
these methods, but a more thorough discussion on obtaining the final clusterings
is available in \autoref{sec:prob_form}. Then, we demonstrate that all
these methods can be expressed as special cases of a unifying framework, as
illustrated in \autoref{fig:methodshierarchy}.  Specifically, we show that
they can be expressed as variants of PARAFAC \cite{carroll1970analysis},
\cite{PARAFAC}, where the third factor matrix is constrained to be the
product of two matrices.

Please note that the goal of this unifying framework is not necessarily to
precisely encapsulate the exact optimization problems of these methods.
Rather, its goal is to abstract their essence. For example, we will assume
that a method is conceptually aiming to calculate the same model, independent
of whether it algorithmically imposes a constraint as a hard constraint or
as a soft constraint. Similarly, we will assume that it conceptually remains the
same, independent of whether it imposes constraints like non-negativity or
sparsity.

\subsection{ComClus.} 
ComClus \cite{ni2017comclus} can be expressed as  
\begin{multline}
    \inf_{\mat{U},\mat{W},\mat{A},\mat{B}\geq 0} \sum_{k=1}^K
    \norm{\mat{X}^{(k)}- \mat{O}^{(k)}\mat{U} \mat{D}_\mat{W}^{(k)}
    (\mat{O}^{(k)}\mat{U})^T }^2+
    \\
    r(\mat{U},\mat{W},\mat{A},\mat{B})
    \label{eq:comclus}
\end{multline}
where
$r(\mat{U},\mat{W},\mat{A},\mat{B}):=\beta||\mat{W}-\mat{A}\mat{B}||^2+\rho(||\mat{U}||_1+||\mat{A}||_1+||\mat{B}||_1)$,
each $\mat{O}^{(k)}$ is a user-defined indicator matrix, and $\beta$ and
$\rho$ are user-defined penalty parameters. By making the closed-world
assumption \cite{nickel2015review}, the first term becomes equal to 
\begin{equation*}
    \sum_{k=1}^K \norm{\mat{X}^{(k)}- \mat{U} \mat{D}_\mat{W}^{(k)} \mat{U}^T }
    =
     \norm{\tensor{X}-\cp{\mat{U}}{\mat{U}}{\mat{W}}{}{}{}}
\end{equation*}
where $\tensor{X}_{::k}:=\mat{X}^{(k)}$. Therefore, ComClus can be
interpreted as aiming to approximate $\tensor{X}$ by $\cp{U}{U}{W}{}{}{}$ such that $\mat{W}\approx \mat{A}\mat{B}$, where $\mat{U},\mat{A},\mat{B}$ are sparse and non-negative, and $\mat{W}$ is non-negative.

\subsection{Richcom.}
Richcom \cite{gujral2020beyond} can be expressed as 
\begin{multline}
    \inf_{\substack{\mat{U}^{(m)},\mat{V}^{(m)},\\ \mat{a}^{(m)} }}
    \norm{\tensor{X}-\sum_{m=1}^M \left(\mat{U}^{(m)}
    {\mat{V}^{(m)}}^T\right)\times_3 \mat{a}^{(m)}}^2
    +\\
    \sum_{m=1}^M r\left(\mat{U}^{(m)},\mat{V}^{(m)},\mat{a}^{(m)}\right)
    \label{eq:richcom}
\end{multline}
where $r\left(\mat{U}^{(m)},\mat{V}^{(m)},\mat{a}^{(m)}\right)$ encodes the
sparsity and non-negativity constraints. We notice that
\begin{equation*}
    \begin{split}
        \sum_{m=1}^M \left(\mat{U}^{(m)} {\mat{V}^{(m)}}^T\right)\times_3
        \mat{a}^{(m)} &= 
        \sum_{m=1}^M \cp{U}{V}{a}{^{(m)}}{^{(m)}}{^{(m)}\mat{1}^T}\\
        &=\cp{U}{V}{AB}{}{}{}
    \end{split}
\end{equation*}
where $\mat{U}:=
\begin{bmatrix}
    \mat{U}^{(1)}, \cdots, \mat{U}^{(M)}
\end{bmatrix}$, 
$\mat{V}:=
\begin{bmatrix}
    \mat{V}^{(1)}, \cdots,\mat{V}^{(M)} 
\end{bmatrix}$,
$\mat{A}:=
\begin{bmatrix}
    \mat{a}^{(1)}, \cdots, \mat{a}^{(M)}
\end{bmatrix}$ 
and $\mat{B}$ is an appropriate indicator matrix. The constraints of Richcom
are then equivalent to requiring $\mat{U}$,$\mat{V}$, and $\mat{A}$ to be
sparse and non-negative. Notice that, as opposed to ComClus, $\mat{B}$ here
is a fixed matrix that needs to be defined by the user. \reminder{mention
that these models are also a special case of block term decomposition}

\subsection{Centroid-based Multilayer  Network Clustering  (CMNC).} CMNC
\cite{chen2019tensor} is similar to Richcom with their
differences being that for CMNC $\mat{U}^{(m)}=\mat{V}^{(m)}$ for all
$m$,  $\sum_{m=1}^M \mat{a}^{(m)}_i=1$ for all $i$, no sparsity is
imposed and each view of the data tensor is preprocessed as $
{\mat{D}^{(k)}}^{-\frac{1}{2}}\mat{X}^{(k)}{\mat{D}^{(k)}}^{-\frac{1}{2}}$
where $\mat{D}^{(k)}:=\diag{\sum_{i=1}^I\mat{X}^{(k)}_{:i}}$.

\subsection{Spectral Clustering.}
\label{sec:spec_clus}

In spectral clustering \cite{von2007tutorial}, \cite{ng2001spectral},
\cite{zhou2005learning}, \cite{klus2022koopman}, an important quantity is the
Laplacian, $\mat{L}$,  defined as $\mat{D}-\mat{X}$, with
$\mat{D}:=\diag{\sum_{i=1}^I\mat{X}_{:i}}$. In turn, the symmetric
normalized Laplacian, $\mat{L}_{sym}$, is defined as
${\mat{D}}^{-\frac{1}{2}}\mat{L}{\mat{D}}^{-\frac{1}{2}}$.  Note that the
eigenspace of $\mat{L}_{sym}$ corresponding to its eigenvalue $\lambda \in
[0,2]$ is also the eigenspace of $\mat{S} := \mat{I}-\mat{L}_{sym} =
{\mat{D}}^{-\frac{1}{2}}\mat{X}{\mat{D}}^{-\frac{1}{2}}$ corresponding to
its eigenvalue $1-\lambda \in [-1,1]$.

\subsubsection{Single-View Spectral Clustering.}

Single-view spectral clustering can be formulated as
$\inf_{\mat{U}^T\mat{U}=\mat{I}}\norm{-\mat{L}_{sym}-\mat{U}\mat{U}^T}$, and
notice that
$\mat{U}\mat{U}^T=\cp{U}{U}{\mat{1}^T}{}{}{}=\cp{U}{U}{\mat{A}\mat{B}}{}{}{}
$, where $\mat{A}:=1$ and $\mat{B}:= \mat{1}^T$.

\subsubsection{Multi-View Spectral Clustering.}

In \cite{liu2012multiview} the authors proposed the MC-TR-I-EVDIT method
which models multi-view spectral clustering as  
\begin{equation}
    \label{eq:multi_weighted}
    \sup_{\substack{\mat{U}^T\mat{U}=\mat{I}\\\mat{a}\succeq\mat{0}, \norm{\mat{a}}=1
    }} \sum_{k=1}^K\tr{\mat{U}^T\mat{a}_k\mat{S}^{(k)}\mat{U}}.
\end{equation}
They also proposed the MC-TR-I-EVD method which can be seen as a variant of
MC-TR-I-EVDIT where all elements of $\mat{a}$ are further constrained to
be identical.

Then, in \cite{kumar2011co} the authors propose two forms of  co-regularized
spectral clustering with one utilizing a centroid-based co-regularization as
\begin{multline}
    \label{eq:centroid_coreg}
    \sup_{\substack{{\mat{U}^{(k)}}^T\mat{U}^{(k)}=\mat{I}\\
    {\mat{U}^*}^T\mat{U}^*=\mat{I}}}
    \sum_{k=1}^K\tr{{\mat{U}^{(k)}}^T\mat{S}^{(k)}\mat{U}^{(k)}}
    +
    \\
    \lambda_k \tr{\mat{U}^{(k)}{\mat{U}^{(k)}}^T\mat{U}^*{\mat{U}^*}^T}
\end{multline}
and the other utilizing the pairwise co-regularization $\lambda
\sum_{i\neq
j}\tr{\mat{U}^{(i)}{\mat{U}^{(i)}}^T\mat{U}^{(j)}{\mat{U}^{(j)}}^T}$
instead.  Note that both of these formulations can be seen as relaxed
versions of MC-TR-I-EVD where the embeddings from different views are
forced to only be similar to each other instead of exactly equal.
Specifically, the regularization terms force the columnspaces of all
$\mat{U}^{(k)}$ to become more similar with each other as $\lambda$ and
$\lambda_k$ take larger values. In the limit, their columnspaces become
identical and the  rows of $\mat{U}^{(k)}$ will be just orthogonally
rotated versions of the rows of any other  $\mat{U}^{(i)}$.  Therefore,
applying k-means, as suggested by the authors,
on the rows of any $\mat{U}^{(k)}$ will lead to the same node clusters.

Now notice that \eqref{eq:multi_weighted} has the same optimal set as
\begin{equation}
 \arginf_{
     \substack{\mat{U}^T\mat{U}=\mat{I}\\\mat{A}\succeq\mat{0}, \norm{\mat{A}}=1}
 }
    \norm{\tensor{S}-\cp{U}{U}{AB}{}{}{}}
    \label{eq:multi_view_spectral}
\end{equation}
where $\tensor{S}_{::k}:=\mat{S}^{(k)}$, $\mat{A}:=\mat{a}$ and
$\mat{B}:=\mat{1}^T$ (see \autoref{sec:multi_view_spectral_proof} for
proof). Similarly, we can show that MC-TR-I-EVD has the same optimal set as
$\inf_{\mat{U}^T\mat{U}=\mat{I} }\norm{\tensor{S}-\cp{U}{U}{AB}{}{}{}}$
where $\tensor{S}_{::k}:=\mat{S}^{(k)}$, $\mat{A}:=\mat{1}$ and
$\mat{B}:=\mat{1}^T$. Lastly, note that due to the close connection of
\eqref{eq:centroid_coreg} and its variant with MC-TR-I-EVD, one can argue
that the same reformulation captures the essence of these problems as well.

\section{Proposed Method.}
\label{sec:MSSC}
In this section, we develop \method, a principled graph clustering method
which can be viewed as an instance of this unifying framework. \method{} also
generalizes spectral clustering \cite{von2007tutorial} to
multi-view graphs with multiple view structures, and, as we will show in
\autoref{sec:interpret}, it is closely associated with k-means as well.

Based on our discussion in \autoref{sec:spec_clus}, we will consider the
modified data tensor $\tensor{Y}$, where
$\tensor{Y}_{::k}:={\mat{D}^{(k)}}\tensor{X}_{::k}{\mat{D}^{(k)}}$ and
$\mat{D}^{(k)}:=\diag{ 1/\sqrt{\sum_{i=1}^I\mat{X}_{:ik}}}$.
Then, we observe that in the special case of a single-view graph, where $K=M=1$ and
$R$ is the true number of clusters, we can use
properties of the best positive semi-definite approximation of a matrix (see
\autoref{sec:psd_approx_proof} for details), along with the fact that we expect
$\tensor{Y}$ to have $R$ eigenvalues close to 1, to show that
\begin{multline}
    \arginf_{{\mat{U}^T\mat{U}=\mat{I}}}
    \left(\inf_{a\geq0,\mat{b}\succeq\mat{0}}\norm{\tensor{Y}-a\mat{U}\diag{\mat{b}}\mat{U}^T}\right)=
    \\
    \arginf_{{\mat{U}^T\mat{U}=\mat{I}}}\norm{\tensor{Y}-\mat{U}\mat{U}^T}.
    \label{eq:single_view}
\end{multline}
Notice that the constraints on $a$ and $\mat{b}$ of the left-hand side
problem are more relaxed than the constraints on the implicit $a$ and
$\mat{b}$ of the right-hand side problem. Therefore, this result can be
particularly useful in designing solvers that are less prone to bad local
optima. Also, to understand why we opted for non-negativity constraints,
first note that allowing $a$ to be negative may make \eqref{eq:single_view}
to not hold for any arbitrary $\tensor{Y}$ since the left-hand side problem
will select eigenvectors corresponding to its largest negative eigenvalues
if they are of larger magnitude than its positive eigenvalues. This is in
contrast to the right-hand side problem which always retrieves eigenvectors
corresponding to the maximum eigenvalues. Similarly, if we allow $\mat{b}$
to have negative elements, then the left-hand side problem is a direct
application of the Eckart-Young theorem \cite{eckart1936approximation} and
will, therefore, select eigenvectors corresponding to the maximum magnitude
eigenvalues instead.  

To design a generalized version of this problem for arbitrary values of $K$
and $M$, first notice that $
a\mat{U}\diag{\mat{b}}\mat{U}^T=\cp{U}{U}{}{}{}{a\mat{b}^T}=\cp{U}{U}{AB}{}{}{}
$, where $\mat{A}:= a$ and $\mat{B}:= \mat{b}^T$. Therefore, in the general
case where $\tensor{Y}$ has multiple views with multiple view structures, we
propose formulating the problem as
\begin{equation}
    \inf_{\substack{{\mat{U}^{(m)}}^T\mat{U}^{(m)}=\mat{I}
    \\
    \mat{A},\mat{B}^T\in \myset{I}}} \norm{\tensor{Y}-\cp{U}{U}{AB}{}{}{}}^2
    \label{eq:proposed1}
\end{equation}
where $\myset{I}$ is the set of all matrices whose rows contain only a
single non-zero element each, and the columns of $\mat{U}^{(m)}$ are defined
to be the subset of columns of $\mat{U}$ that correspond to the positions of
the non-zero elements of $\mat{B}_{m:}$. Note that the constraint
$\mat{A},\mat{B}^T\in \myset{I}$ is particularly important for two reasons.
First, it leads to a $\left\{\mat{U}^{(m)}\right\}_{m=1}^M$ that forms a
partition of the columns of $\mat{U}$, which in turn implies that we get a
distinct model for each view structure. Second, as we will show next, it
enables $\mat{U}$ and $\mat{B}$ to be calculated simultaneously, which can
prove beneficial in terms of designing a solver that is both faster and less
prone to bad local optima.

Lastly, note that for completeness, in \autoref{sec:optim_steps} where we
will derive optimization steps for \eqref{eq:proposed1}, we will also
derive optimization steps for further constrained versions of it where the
non-zero elements of $\mat{A}$ and $\mat{B}$ can be either all-ones or
non-negative. That said, we still recommend using the non-negativity
constraint for both $\mat{A}$ and $\mat{B}$ as the default option. To
understand the reasoning behind this choice, first notice that
\eqref{eq:proposed1} can properly generalize the left-hand side and
right-hand side of \eqref{eq:single_view}, only when the non-negativity and
all-ones constraint is imposed, respectively. That is, without these
additional constraints, \eqref{eq:proposed1} does not necessarily lead to
generalization of spectral clustering. Additionally, notice that the
non-negativity constraint may be preferable to the all-ones constraint
because it leads to an optimization problem that is more relaxed, and,
therefore, the resulting solver can be expected to be less prone to bad
local optima. 

\subsection{Optimization Steps.}
\label{sec:optim_steps}
We propose solving \eqref{eq:proposed1} in a block coordinate descent
fashion by alternatingly updating $\mat{A}$, and then $\mat{U}$ and
$\mat{B}$ simultaneously. Note that, although in this work we do not provide
arguments regarding convergence, our optimization scheme is guaranteed to
monotonically improve the objective function after each update of $\mat{A}$,
$\mat{B}$ and $\mat{U}$.

\subsubsection{Steps for $\mat{A}$.}
By observing that $\norm{\tensor{Y}-\cp{U}{U}{AB}{}{}{}} =
\norm{\mat{Y}_{(3)}-\mat{A}\mat{B}\left(\mat{U} \odot \mat{U}\right)^T}$, we
can see that for fixed $\mat{B}$ and $\mat{U}$, the optimal $\mat{A}$ is
given by the solution of a constrained linear least squares problem.
Specifically, below we discuss three different constraints for the non-zero
elements of $\mat{A}$:

\paragraph{All-ones $\mat{A}$.}
Here we observe that the $k$-th row of $\mat{A}$ is optimal when it is an
indicator vector with its $m$-th element equal to 1 where $m =
\arginf_m\norm{\left[\mat{Y}_{(3)}\right]_{k:}-\left[\mat{B}\left(\mat{U}
\odot \mat{U}\right)^T\right]_{m:}}$, or equivalently by using tensor
notation
$m=\arginf_{m}\norm{\tensor{Y}_{::k}-\mat{U}\diag{\mat{B}_{m:}}\mat{U}^T}$. 

\paragraph{Unconstrained $\mat{A}$.}
Here, the only non-zero element of $\mat{A}_{k:}$ will be at position $m$
if, and only if, among all lines defined by each row of
$\mat{B}\left(\mat{U}\odot\mat{U}\right)^T$, the one defined by the $m$-th
row is the closest one to the $k$-th row of $\mat{Y}_{(3)}$. Thus,
$m=\argsup_m
\left|\sum_{i,j}\tensor{Y}_{ijk}\mat{Q}^{(m)}_{ij}\right|/\norm{\mat{Q}^{(m)}}$
where $\mat{Q}^{(m)}:=\mat{U}\diag{\mat{B}_{m:}}\mat{U}^T$. In turn, we have
$\mat{A}_{km}=\arginf_{\alpha}\norm{\tensor{Y}_{::k}-\alpha \mat{Q}^{(m)}}$
which can be calculated in closed form as
$\sum_{i,j}\tensor{Y}_{ijk}\mat{Q}^{(m)}_{ij}/\norm{\mat{Q}^{(m)}}^2$.

\paragraph{Non-negative $\mat{A}$.}
Here, the only non-zero element of $\mat{A}_{k:}$ can be $\mat{A}_{km}$ if,
and only if, $\left[\mat{Y}_{(3)}\right]_{k:}$ forms the smallest angle with
the $m$-th row of $\mat{B}\left(\mat{U}\odot\mat{U}\right)^T$. Thus, we have
$m=\argsup_m
\sum_{i,j}\tensor{Y}_{ijk}\mat{Q}^{(m)}_{ij}/\norm{\mat{Q}^{(m)}}$  where
$\mat{Q}^{(m)}:=\mat{U}\diag{\mat{B}_{m:}}\mat{U}^T$. In turn,
$\mat{A}_{km}$ is calculated as in the unconstrained case, unless
$\sum_{i,j}\tensor{Y}_{ijk}\mat{Q}^{(m)}_{ij}$ is negative, in which case
$\mat{A}_{km}=0$.  Note that $\mat{A}_{km}=0$ if, and only if, none of the
rows of $\mat{B}\left(\mat{U}\odot\mat{U}\right)^T$ forms an acute angle
with $\left[\mat{Y}_{(3)}\right]_{k:}$.

\subsubsection{Steps for $\mat{U}$ and $\mat{B}$.}
Here, first notice that \eqref{eq:proposed1} can be reexpressed as
\begin{equation}
\arginf_{{\substack{{\mat{U}^{(m)}}^T\mat{U}^{(m)}=\mat{I}
    \\
    \mat{A},\mat{B}^T\in \myset{I}}}} \sum_{m=1}^M
 \sum_{k=1}^K  \norm{\tensor{Y}_{::k}-\mat{A}_{km}\mat{U}^{(m)}\mat{D}_{\mat{B}}^{(m)}{\mat{U}^{(m)}}^T}^2 \label{eq:U_B_update}
\end{equation} 
where  $\mat{D}_{\mat{B}}^{(m)}$ is defined to be a diagonal matrix
containing only the non-zero elements of $\mat{B}_{m:}$. We also define
$\mathcal{S}_m$ as the set of indices of the views assigned to view cluster $m$.
In turn, for a fixed $\mat{A}$\reminder{explain what happens when a column
of A is all-zeros}, we distinguish three types of constraints for the
non-zero elements of $\mat{B}$:

\paragraph{All-ones $\mat{B}$.} 
In this case, we can show that for a fixed $\mat{A}$ \eqref{eq:U_B_update} can be reexpressed as
\begin{equation}
    \argsup_{\substack{{\mat{U}^{(m)}}^T\mat{U}^{(m)}=\mat{I}\\ \mat{B}^T\in \mathcal{I}}}
    \sum_{m=1}^M
    \tr{{\mat{U}^{(m)}}^T \mat{Z}^{(m)}\mat{U}^{(m)}}
    \label{eq:proposed1_2}
\end{equation}
where $\mat{Z}^{(m)}:=\sum_{k=1}^K
2\mat{A}_{km}\tensor{Y}_{::k}-\norm{\mat{A}_{:m}}^2\mat{I}$ 
(see \autoref{sec:proposed1_2_proof} for detailed derivation). Therefore, we
can see that, for a fixed $\mat{B}$ and for all $m$, the optimal
$\mat{U}^{(m)}$ has columns the eigenvectors of $\mat{Z}^{(m)}$
corresponding to its largest eigenvalues\reminder{cite}, which in turn
implies that the corresponding summand in the objective function of
\eqref{eq:proposed1_2} will be the sum of these eigenvalues.  Thus, if we
define $\mathcal{E}_m$ to be a set containing the eigenvalues of
$\mat{Z}^{(m)}$, and $\mathcal{E}_{max}$ to be a set containing the largest
$R$ elements of $\cup_{m=1}^M \mathcal{E}_m$, then we can see that the
optimal value of the objective function in \eqref{eq:proposed1_2} cannot be
greater than the sum of the elements of $\mathcal{E}_{max}$. In fact, we can
achieve this value if for all $m$ we set $\mat{U}^{(m)}$ to have columns the
eigenvectors of $\mat{Z}^{(m)}$ corresponding to the eigenvalues in
$\mathcal{E}_m \cap \mathcal{E}_{max}$. Then, the optimal pair
$(\mat{U},\mat{B})$ can be given by setting
$\mat{U}=\begin{bmatrix}\mat{U}^{(1)}, \mat{U}^{(2)}, \cdots, \mat{U}^{(M)}
\end{bmatrix}$ and then deriving the optimal $\mat{B}$ by noticing that
$\mat{B}_{mr}$ is non-zero if, and only if, any of the columns of
$\mat{U}^{(m)}$ was assigned as the $r$-th column of $\mat{U}$.
\reminder{what happens when multiple eigenvalues are identical}

\paragraph{Unconstrained $\mat{B}$.}
In this case, for a fixed $\mat{A}$, \eqref{eq:U_B_update} can be
reexpressed as
\begin{equation}
     \arginf_{\substack{{\mat{U}^{(m)}}^T\mat{U}^{(m)}=\mat{I}
    \\ \mat{B}^T\in \mathcal{I}}}
     \sum_{m=1}^M
\norm{\mat{Z}^{(m)}-\norm{\mat{A}_{:m}}\mat{U}^{(m)}\mat{D}_{\mat{B}}^{(m)}{\mat{U}^{(m)}}^T}^2
\label{eq:proposed2}
\end{equation}
where
$\mat{Z}^{(m)}:=\sum_{k=1}^K\mat{A}_{km}\tensor{Y}_{::k}/\norm{\mat{A}_{:m}}$
(see \autoref{sec:proposed2_proof} for detailed derivation). 
We can now see that if we leave the non-zero elements of $\mat{B}$
unconstrained, then, for all $m$,  $\mat{U}^{(m)}$ will have a fixed number
of columns which will be optimal when they consist of the eigenvectors of
$\mat{Z}^{(m)}$ corresponding to the eigenvalues of largest magnitude. In
turn, the non-zero elements of the optimal $\mat{B}_{m:}$ are the same
eigenvalues divided by $\norm{\mat{A}_{:m}}$\reminder{cite}. Also, notice
that the corresponding summand in the objective function of
\eqref{eq:proposed2} will be the sum of squares of all the remaining
eigenvalues. Therefore, if we define $\mathcal{E}_m$ to be a set containing
the eigenvalues of $\mat{Z}^{(m)}$, and $\mathcal{E}_{max}$ to be a set
containing the $R$ elements of largest magnitude of $\cup_{m=1}^M
\mathcal{E}_m$, then we can see that the optimal value of the objective
function in \eqref{eq:proposed2} cannot be lower than the sum of the squared
elements of $\cup_{m=1}^M \mathcal{E}_m\setminus\mathcal{E}_{max}$. In fact,
we can achieve this value by setting the columns of $\mat{U}^{(m)}$ to be
the eigenvectors of $\mat{Z}^{(m)}$ corresponding to the eigenvalues in
$\mathcal{E}_m \cap \mathcal{E}_{max}$. Then, the optimal pair
$(\mat{U},\mat{B})$ can be given by setting
$\mat{U}=\begin{bmatrix}\mat{U}^{(1)}, \mat{U}^{(2)}, \cdots, \mat{U}^{(M)}
\end{bmatrix}$ and by assigning the elements of $\mathcal{E}_m \cap
\mathcal{E}_{max}$ divided by $\norm{\mat{A}_{:m}}$ as the non-zero elements
of $\mat{B}_{m:}$. 

\paragraph{Non-negative $\mat{B}$.} Here first notice that each
of the $M$ summands in \eqref{eq:proposed2} is minimized via the
best positive semi-definite approximation of the corresponding
$\mat{Z}^{(m)}$.
In other words, we can see that the same arguments as in the unconstrained
case apply, with the difference that $\mathcal{E}_m$ here is instead defined
to contain the largest eigenvalues of $\mat{Z}^{(m)}$ after its negative
eigenvalues are set equal to zero.

\subsection{Model Interpretation.}
\label{sec:interpret}
By looking at the updates for $\mat{U}$, $\mat{A}$ and $\mat{B}$ we can see
that there is a very natural way of interpreting \method{}. First, from
\eqref{eq:proposed1_2} and \eqref{eq:proposed2} we can see that, for a fixed
$\mat{A}$, \method{} computes the node clustering for each of the $M$ view
clusters. Specifically, the $m$-th node clustering is calculated, roughly
speaking, by performing spectral clustering based on $\mat{Z}^{(m)}$ which
is the weighted summation of the Laplacians of all views that
belong to the $m$-th view cluster. Also, for fixed $\mat{U}$ and $\mat{B}$,
and by noticing that $\cp{U}{U}{AB}{}{}{}=\cp{U}{U}{B}{}{}{}\times_3
\mat{A}$, we can see that each row of $\mat{A}$ is calculated in exactly the
same fashion as the cluster assignment step of k-means or a k-lines method,
depending on the type of constraint. Specifically, we can think of the
frontal slices of $\cp{U}{U}{B}{}{}{}$ as the centroids representing the
clusters, while the $k$-th row of $\mat{A}$ can be seen as an indicator
vector encoding the cluster membership of the $k$-th data point,
$\tensor{Y}_{::k}$. 

This observation leads us to another interesting finding. That is, if we
constrain the non-zero elements of $\mat{A}$ and $\mat{B}$ to be all-ones
and unconstrained, respectively, and we set $R=M\cdot I$, then \method{}
becomes identical to k-means. To see this, note that in this case, for a
fixed $\mat{A}$, there will always exist $\mat{U}$ and $\mat{B}$ in
\eqref{eq:proposed2} such that, for all $m$,
$\norm{\mat{A}_{:m}}\mat{U}^{(m)}\mat{D}_{\mat{B}}^{(m)}{\mat{U}^{(m)}}^T$
will be identical to $\mat{Z}^{(m)}$. Therefore, the
$m$-th frontal slice of the optimal $\cp{U}{U}{B}{}{}{}$ will be
$\mat{Z}^{(m)}/ \norm{\mat{A}_{:m}}$. In turn, this can be simplified to
$\sum_{k\in \mathcal{S}_m}\tensor{Y}_{::k}/\left|\mathcal{S}_m \right|$, with
$\mathcal{S}_m$ being the set of indices of the views assigned to the $m$-th
view cluster. In fact, this is exactly the centroid calculation step of
k-means. Also, notice that if we set $R<M\cdot I$, \method{} can be seen as
a version of k-means where instead of using the usual centroids, we perform
calculations based on their denoised low-rank versions
$\left\{\cp{U}{U}{B}{}{}{}_{::m}\right\}_{m=1}^M$.

\subsection{Space \& Time Complexity.}
\label{sec:space_time_complexity}

Note that our method can readily benefit from various optimizations that
leverage parallelization, sparsity, partial eigendecompositions and more
appropriate sorting techniques. However, for simplicity, we have omitted
such optimizations from the implementation used in all our experiments in
\autoref{sec:experiments}. In this case, a straightforward implementation
prioritizing time minimization  over memory usage can achieve a space
complexity of $O\left(MI^2+K\right)$, and, for $t$ iterations, a time
complexity of $O\left(tMI^3+tKMI^2+tMI\log(MI)\right)$. For more
details, please refer to \autoref{app:space_time_complexity}.

\section{Experiments.}
\label{sec:experiments}

In this section, we perform an in-depth experimental exploration of the
behavior of \method{} both quantitatively and qualitatively. First, in
\autoref{sec:artif_clust_perf} we perform quantitative clustering quality
comparisons with other baseline methods on artificially generated multi-view
graphs with known ground truth labels. Then, in \autoref{sec:real} we
present a qualitative case study which demonstrates the ability of \method{}
to generate meaningful clusterings on real-world multi-structure multi-view
graphs. Lastly, in \autoref{sec:artif_time} we perform an experimental
comparison of the time complexity of all methods. These experiments were
conducted using Multi-Graph Explorer \cite{multi-graph-explorer} whose code
can be accessed at \cite{multi-graph-explorer-github}.


Lastly, note that in all methods discussed so far, the clustering process can be
divided into four segments: data preprocessing, embedding calculation,
embedding postprocessing and embedding clustering.  However, embedding
calculation is arguably the central novelty in both the existing methods and
our proposed method. Therefore, we will consider both the clustering
schemes as proposed in the original papers and enhanced versions where the
best combination of the remaining segments is selected. We will call these
``original methods'' and ``enhanced methods'', respectively. For the precise
details of the experiment setup, please refer to
\autoref{sec:experiment_details}.

\reminder{
1) Explain how all-nan and all-zeros latent vectors are handled for clustering\\
2) Give example that shows why non-negative could be better than unconstrained (e.g. when overfactoring a  graph with bipartite communities)\\
3) Improve citation of code and datasets}

\subsection{Clustering Quality on Artificial Data.}
\label{sec:artif_clust_perf}

We generate a directed unweighted multi-view graph with 120 nodes and 9
views, which leads to a tensor of size $120\times 120 \times 9$. The views
form 3 clusters with 3 views each, and each view cluster corresponds to 3, 2
and 2 node clusters, respectively. Specifically, the node clusters
corresponding to the first view cluster contain 60, 40 and 20 nodes,
respectively, while in the second view cluster they contain 100 and 20
nodes, respectively, and in the third view cluster they contain 20 and 100
nodes, respectively. All node clusters are $\gamma$-quasi-cliques
\cite{pattillo2013maximum} whose intra-community edge density, $\gamma$,
takes values in the set of the 8 equally spaced values from $15\%$ to $1\%$,
inclusive.  For a specific generated graph, all quasi-cliques have identical
value of $\gamma$. Then, we randomly select $1\%$ of all pairs of nodes,
and, for each pair, we remove their connecting edge if they are connected,
or introduce a new edge between them if they are not connected. For more
details, please refer to \autoref{sec:artif_clust_perf_appendix}.

\subsubsection{Results Analysis.}
\label{sec:artif_cluster_perf:results}

\begin{figure}
    \centering
    \subcaptionbox*{}[\linewidth]{\includegraphics[clip, trim=0 149pt 0 0]{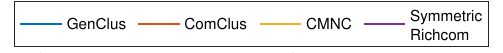}}
    \vspace*{-15pt}

    \subcaptionbox{Original methods\label{fig:clust_perf_orig}}[0.5\linewidth]{%
        \includegraphics[clip, trim=0 0 120.5pt 25pt ]{Images/artificial_performance_AMI_adobe_fixed.pdf}}%
    \subcaptionbox{Enhanced methods\label{fig:clust_perf_enhanced}}[0.5\linewidth]{%
        \includegraphics[clip, trim=120.5pt 0 0 25pt]{Images/artificial_performance_AMI_adobe_fixed.pdf}}%
    \caption{Clustering performance comparisons. Lines represent
    medians, while shaded areas represent 25-th and 75-th percentiles.
    Higher values signify better clustering quality and the maximum possible
    value is 1.}
    \label{fig:clust_perf}
\end{figure}

%
%

First, we make comparisons of the original methods as shown in
\autoref{fig:clust_perf_orig}. Here we see that \method{} offers superior
performance compared to all baselines, and, in fact, it is the only
method that manages, in the median, to perfectly reconstruct the
ground truth communities even with an intra-community edge density,
$\gamma$, as low as 11\%.  ComClus is the next best method, and it also
significantly outperforms CMNC and Symmetric Richcom. Also, although
ComClus slightly outperforms \method{} for very low values of $\gamma$, it
performs significantly worse than \method{} for higher values of $\gamma$.
CMNC outperforms Symmetric Richcom in the median, but due to its high
variance we cannot confidently declare it as the clear winner. 

Now, we make comparisons of the enhanced methods as shown in
\autoref{fig:clust_perf_enhanced}. In this case, we observe that the
enhanced versions of ComClus and Symmetric Richcom present a significant
performance uplift. In fact, ComClus now observably tends to perform better
than \method{}. At the same time, Symmetric Richcom not only became
significantly better than CMNC, but its performance is now closer to the
performance of \method. On the other hand, the enhanced versions of
\method{} and CMNC barely show a performance improvement.  However, note
that, at least for \method{}, this is a positive outcome, since it
experimentally validates our theoretical arguments from \autoref{sec:MSSC}
in favor of pairing \method{} with normalized Laplacians and with
non-negativity constraints for $\mat{A}$ and $\mat{B}$.

\subsection{Real-World Case Study.}
\label{sec:real}
\reminder{explain real dataset better\\
time it needed to run}
\reminder{give a brief description}

In this case study we are using a dataset of flight routes from 2012
\cite{openflights_dataset} containing flights from a large number of
airlines and airports around the world. Note that, to simplify the wording
of our observations, we will refer to the Americas as if they were a single
continent. For additional details, please refer to
\autoref{sec:real_world_1}.

\begin{figure}[!t]
    \centering
    \subfloat[Airline clusters]{
        \includegraphics[clip,trim=1cm 0cm 1cm 0cm,height=0.20\textwidth]{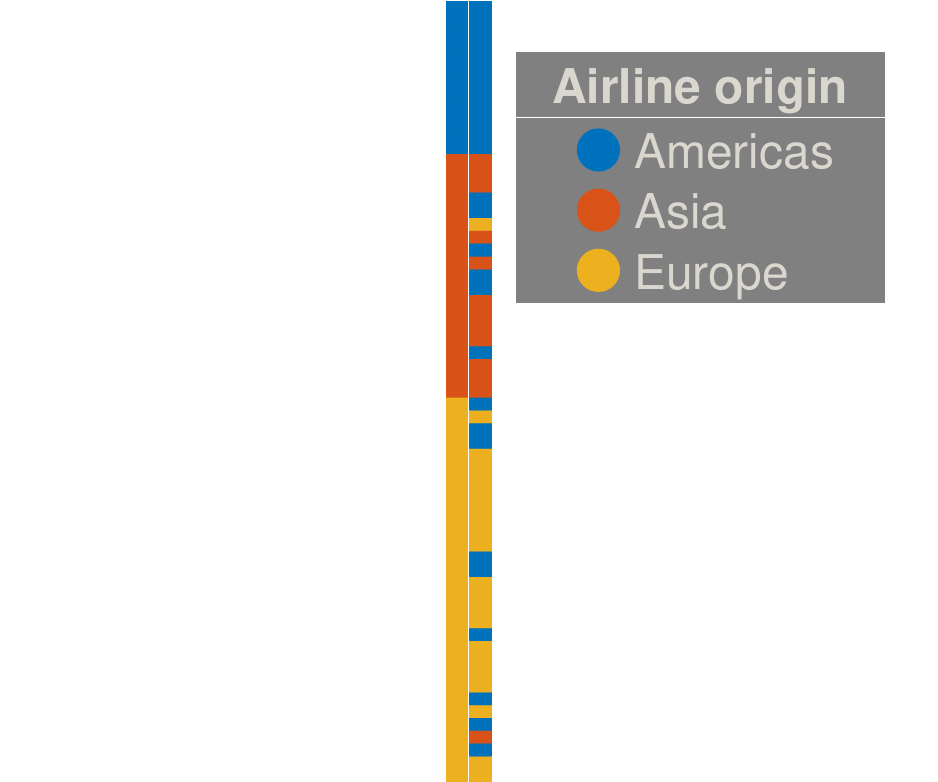}
        \label{fig:flights_views}
    }
    \subfloat[\raisebox{0pt}{\parbox[c]{0.15\textwidth}{\centering Airport clusters of Americas' airlines}}]{
        \includegraphics[clip,trim=1cm 0cm 1.0cm
        0cm,height=0.2\textwidth]{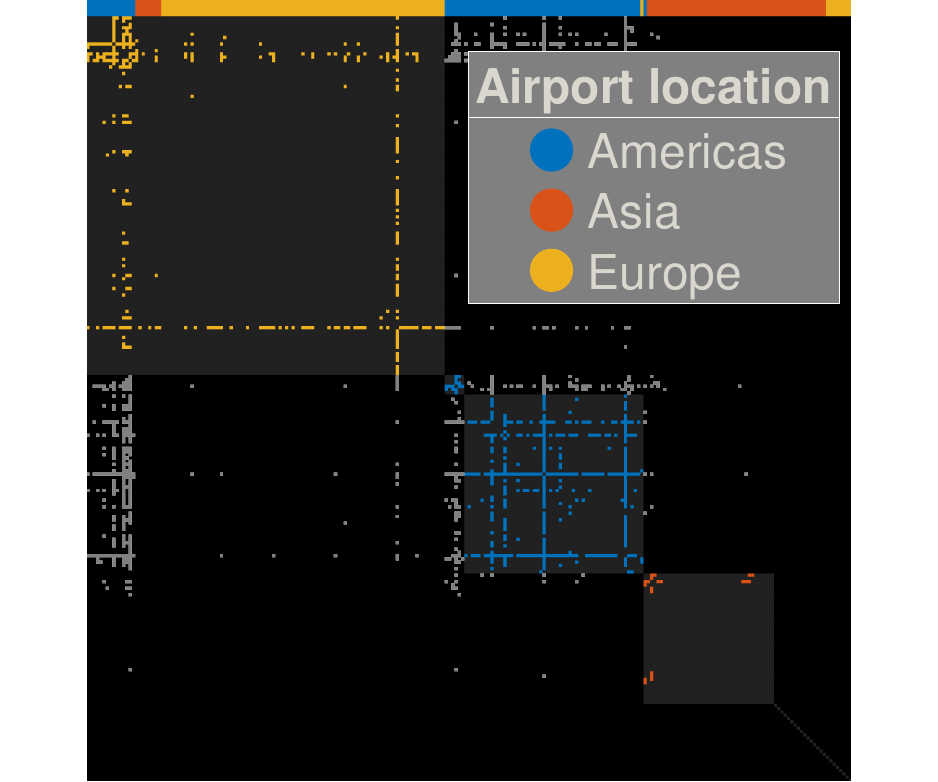}
        \label{fig:flights_nodes_1}
    }
\vspace{10pt}
    \subfloat[\raisebox{0pt}{\parbox[c]{0.15\textwidth}{\centering Airport clusters of Asian airlines}}]{
        \includegraphics[clip,trim=1.0cm 0cm 1cm
        0cm,height=0.2\textwidth]{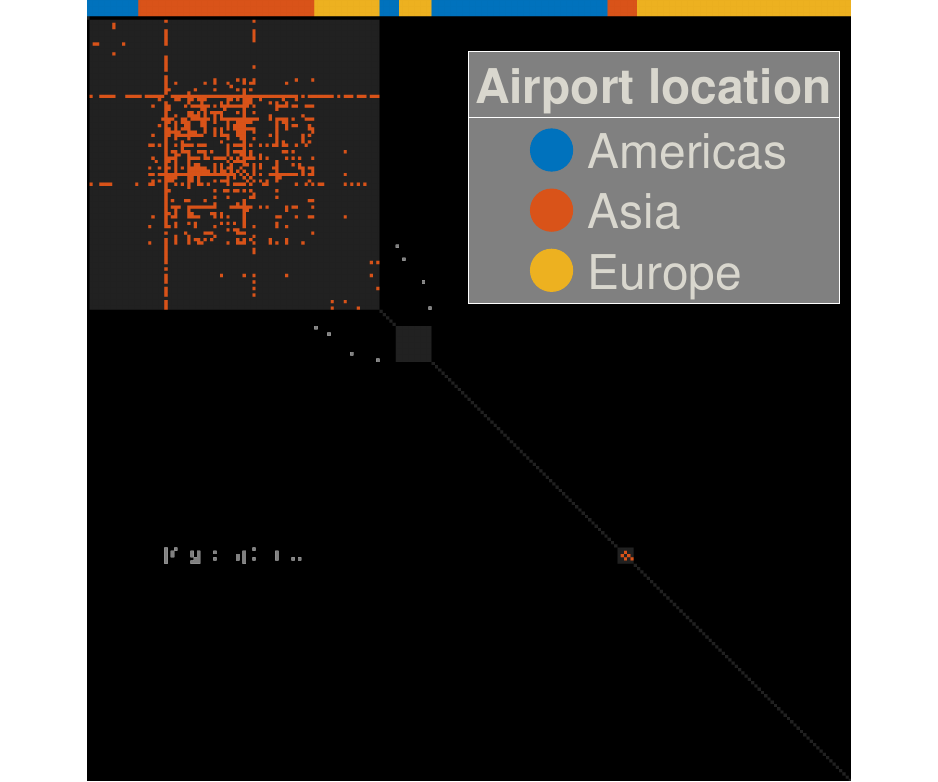}
        \label{fig:flights_nodes_2}
    }
    \subfloat[\raisebox{0pt}{\parbox[c]{0.15\textwidth}{\centering Airport clusters of European airlines}}]{
        \includegraphics[clip,trim=1.0cm 0cm 1cm
        0cm,height=0.2\textwidth]{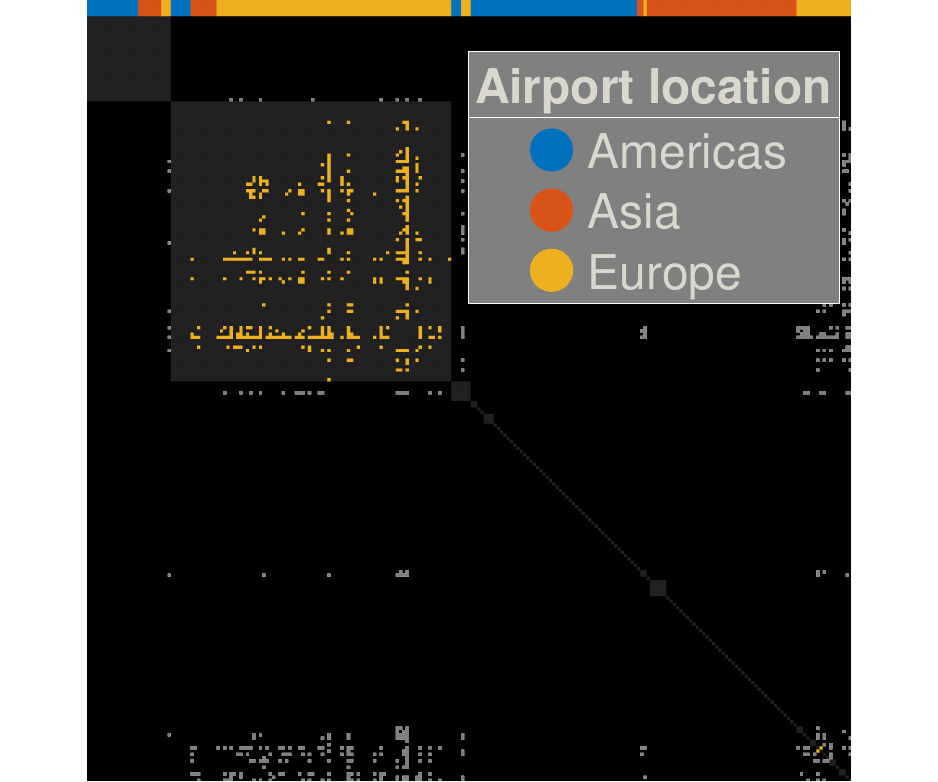}
        \label{fig:flights_nodes_3}
    }
    \caption{Clustering of airlines and airports by \method{} for the
    flights dataset. All colored bars are split into small pieces
    representing individual airports or airlines, and the colors represent
    labels. The left-hand bar in (a) depicts the airlines colored based on
    their continent of origin, while the right-hand bar shows them colored
    based on the view clustering produced by \method{}. (b)-(d) show the
    adjacency matrices of representative views from each of the airline
    clusters produced by \method, and the horizontal colored bars on top
    indicate the actual location of each airport. The airport clusters
    generated by \method{} are depicted as grey squares, which is achieved
    by appropriately permuting each adjacency matrix individually.
    }
    \label{fig:flights}
\end{figure}

\subsubsection{Results Analysis.}

\autoref{fig:flights} shows the results of our proposed method on this
dataset. As we can see, the clustering of views (airlines) generated by
\method{} separates the majority of them based on their continent of origin.
This aligns with our intuition that the flight patterns of airlines
originating in the same continent are similar to each other and different
from the flight patterns of airlines from different continents.
Additionally, \autoref{fig:flights_nodes_1} indicates that airlines from the
Americas tend to have substantial presence on all three continents, although
these flights tend to not be intercontinental.  On the other hand,
\autoref{fig:flights_nodes_2} and \autoref{fig:flights_nodes_3} indicate
that airlines originating in Asia and Europe tend to fly almost exclusively
within their continent of origin.

\subsection{Execution Time on Artificial Data.}
\label{sec:artif_time}

\begin{figure*}
    \centering
    \subcaptionbox*{}[\textwidth]{\includegraphics[clip, trim=54pt 122pt 54pt 0]{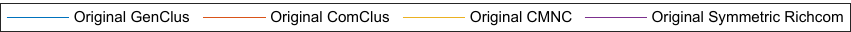}}
    \vspace*{-15pt}
    
    \newlength{\widthSmall}
    \setlength{\widthSmall}{0.2\textwidth} 
    \newlength{\widthBig}
    \setlength{\widthBig}{\dimexpr\textwidth/2-\widthSmall\relax} 
    \hfill
    \subcaptionbox{\label{fig:speed_I}}[81pt]{%
                \includegraphics[clip, trim=9 0 427pt 20.4pt ]{Images/combined_speed_comparisons_fixed_adobe.pdf}}%
                \hspace{25pt}
    \subcaptionbox{\label{fig:speed_K}}[85pt]{%
                \includegraphics[clip, trim=123 0 308pt 20.4pt ]{Images/combined_speed_comparisons_fixed_adobe.pdf}}%
                \hspace{15.05pt}
    \subcaptionbox{\label{fig:speed_R}}[131pt]{%
                \includegraphics[clip, trim=230 0 156pt 20.4pt ]{Images/combined_speed_comparisons_fixed_adobe.pdf}}%
                \hspace{4pt}
    \subcaptionbox{\label{fig:speed_M}}[131pt]{%
                \includegraphics[clip, trim=379.9 0 6.8pt 20.4pt ]{Images/combined_speed_comparisons_fixed_adobe.pdf}}%
    \hfill
    \null

    \caption{Execution time comparisons of original methods for varying
    graph sizes and embedding sizes. Log-log plots are used for all
    experiments, with each plot having powers of ten in its vertical axis
    scaled equally to powers of ten in its horizontal axis. Lines represent
    medians, while shaded areas represent 25-th and 75-th percentiles.}
    \label{fig:speed}
\end{figure*}

Here we experimentally compare the time complexity of all original methods
for increasing graph sizes and embedding sizes. Note that since in this
experiment our interest is in evaluating the time complexity of the
embedding generation of each method, we will omit the normalization and the
clustering of the embeddings from our measurements. For the complete details
of the experiment setup, please refer to \autoref{sec:exec_time_setup}.

\subsubsection{Results Analysis.}

All results are presented in \autoref{fig:speed}. However, note that, due to
out-of-memory errors, only partial results are presented for CMNC.

In \autoref{fig:speed_I}, we see that all methods present an approximately
polynomial time complexity with regards to the number of nodes, and that
ComClus performs very similarly to Symmetric Richcom. Also, while \method{}
exhibits a growth rate similar to ComClus and Symmetric Richcom, it is about
3 to 4 times faster than both of them. CMNC performs the worst among all
methods while also exhibiting a higher growth rate.

Based on \autoref{fig:speed_K}, we reach similar conclusions with regards to
the number of views as well. The main difference here is that
\method{} exhibits even better performance, by being about 7 to 9
times faster than Symmetric Richcom, and about 18 to 21 times faster than
ComClus. Also, Symmetric Richcom is now about 2.5 to 3 times faster than
ComClus. 

In \autoref{fig:speed_R}, we notice that although all methods exhibit an
approximately polynomial time complexity with respect to the dimension of
node embeddings, they seem to have very different growth rates from each
other.  CMNC seems to again be having the worse performance, but due to
out-of-memory errors, it is hard to make strong claims here.  Also,
ComClus exhibits a much worse growth rate compared to Symmetric Richcom.
\method{} presents once again the best performance, and in fact it exhibits
a near-linear time complexity. That said, we suspect that this may be
misleading, as our implementation of \method{} often unnecessarily computes
full eigendecompositions, even though typically only a portion of the
eigenpairs is required.

Lastly, in \autoref{fig:speed_M}, we see that \method{} again significantly
outperforms the baselines, while CMNC produced out-of-memory errors for the
entirety of this experiment. An advantage of ComClus is that it exhibits a
linear time complexity in this case, while \method{} and Symmetric Richcom
seem to still have an approximately polynomial time complexity.

\section{Conclusions.}

In this work, we devised a unifying framework for multiple graph clustering
    methods, which is capable of modeling data as complex as multi-view
    graphs with multiple view structures. Then, we proposed \method{}, a
    novel instance of this framework, which also aims to have principled
    foundations by virtue of being a generalization of the highly successful
    spectral clustering.  Additionally, we conducted in-depth experiments on
    artificial data, in which we controlled for every aspect of the
    clustering workflow. These experiments demonstrated that \method{} can
    have similar or better clustering performance than the baselines, while
    also being more computationally efficient. Lastly, we evaluated
    \method{} on a real-world multi-view graph with multiple view
    structures, which showed that it can effectively model such complex
    datasets and uncover meaningful insights.

\section*{Acknowledgements}
This research was supported by the National Science Foundation under CAREER
    grant no. IIS 2046086, grant no. IIS 1901379, and CREST Center for
    Multidisciplinary Research Excellence in Cyber-Physical Infrastructure
    Systems (MECIS) grant no. 2112650.

\printbibliography
\end{refsection}
\newpage
\appendix
\begin{refsection}
\section{Appendix.}

\subsection{Related Work.}\label{sec:prob_form}

\subsubsection{ComClus.}
\label{sec:ComClus}
ComClus \cite{ni2017comclus} operates on tensors whose frontal slices are
symmetric adjacency matrices with non-negative elements. Specifically, it
models the $k$-th view by approximating $\mat{X}^{(k)}$ as
$\mat{O}^{(k)}\mat{U} \mat{D}_\mat{W}^{(k)} (\mat{O}^{(k)}\mat{U})^T$, where
$\mat{U}$ and $\mat{W}$ are factor matrices of size $I \times R$ and $K
\times R$, respectively, and $\mat{D}_\mat{W}^{(k)}$ is defined as
$\diag{\mat{W}_{k:}}$. Also, each $\mat{O}^{(k)}$ is a predefined indicator
matrix that accounts for the fact that a node that is shared between
different views may be represented by different rows
and columns in the corresponding adjacency matrices. ComClus additionally
defines factor matrices $\mat{A}$ and $\mat{B}$ of sizes $K \times M$ and $M
\times R$, respectively, which are then used to model $\mat{W}$ as
$\mat{AB}$, where $\mat{A}$ is constrained to be an indicator matrix, and
the rows of $\mat{B}$ are the latent representations of the view clusters.
Also, for computational tractability reasons the authors relax the
constraint of $\mat{A}$ and impose sparsity and non-negativity on $\mat{U}$,
$\mat{A}$ and $\mat{B}$, while $\mat{W}$ is constrained to be non-negative.
Then, the authors calculate this model via the following optimization
problem:
\begin{multline}
    \inf_{\mat{U},\mat{W},\mat{A},\mat{B}\geq 0} \sum_{k=1}^K
    \norm{\mat{X}^{(k)}- \mat{O}^{(k)}\mat{U} \mat{D}_\mat{W}^{(k)}
    (\mat{O}^{(k)}\mat{U})^T }^2+
    \\
    r(\mat{U},\mat{W},\mat{A},\mat{B})
    \label{eq:comclus}
\end{multline}
where
$r(\mat{U},\mat{W},\mat{A},\mat{B}):=\beta||\mat{W}-\mat{A}\mat{B}||^2+\rho(||\mat{U}||_1+||\mat{A}||_1+||\mat{B}||_1)$,
and $\beta$ and $\rho$ are penalty parameters that need to be defined by the
user. Each factor matrix is updated individually via multiplicative updates
in a block coordinate descent fashion \cite{wright2015coordinate}, until the
value of the objective function stops improving. Lastly, the authors assign
the $m$-th view to the $n$-th cluster when the $n$-th element of the $m$-th
row of $\mat{A}$ has the largest magnitude among all elements of that row.
Then, they consider the embeddings of the nodes of the $n$-th calculated
view cluster to be the rows of $\mat{U}\diag{\mat{B}_{n:}}$ and cluster the
nodes in the same way.

Note that the original formulation of ComClus includes two additional terms.
Specifically, one term forces the latent representations of two views to be
more orthogonal to each other as the number of their mutual nodes decreases,
while the other term enables the user to perform semi-supervised learning
when there is additional available information about how the various views
relate to each other. While in this work we omit these terms, note that the
effect of the term that imposes orthogonality can be achieved implicitly in
a different way. That is, instead of considering an edge between two
unshared nodes as unknown, we can consider it as known with weight zero. In
this way, the larger the number of unshared nodes between two views
 is, the larger the number of elements which are
non-zero in only one of the corresponding adjacency matrices of the networks
will be. This implies that these adjacency matrices will tend to be
orthogonal to each other, a property which will tend to hold for their
latent representations as well. In fact, the original ComClus formulation
can be interpreted as operating under the open world assumption
\cite{nickel2015review}, while our modification can be seen as operating
under the closed world assumption\reminder{needs citation too?}.

\reminder{Explain how improper scaling of columns of A and rows of B can affect final result}

\subsubsection{Methods Based on Block Term Decomposition.}
\paragraph{Richcom.} Richcom \cite{gujral2020beyond} applies a rank-$(L_m,
L_m, 1)$ terms decomposition \cite{de2008decompositions} on the data tensor
and operates on tensors whose frontal slices can be arbitrary adjacency
matrices with non-negative elements. Specifically, it considers a tensor
$\tensor{X}$ of size $I \times J \times K$ and models the graph by
approximating $\tensor{X}$  as $\sum_{m=1}^M \left(\mat{U}^{(m)}
{\mat{V}^{(m)}}^T\right)\times_3 \mat{a}^{(m)}$, where $\mat{U}^{(m)}$,
$\mat{V}^{(m)}$ and $\mat{a}^{(m)}$ are factors of sizes $I\times R_m$,
$J\times R_m$ and $K$, respectively. The authors implicitly assume that the
ordering of the nodes is identical for all views, and, therefore, no
permutations similar to these of ComClus are required. Also, note that all
adjacency matrices need to be of size $I \times J$, which can be
interepreted as either that Richcom only works when all nodes exist in all
views, or that it makes the closed world assumption. Note that all factor
matrices are constrained to be sparse and non-negative. Then, the authors
calculate their model via the following optimization problem:
\begin{multline}
    \inf_{\substack{\mat{U}^{(m)},\mat{V}^{(m)},\\ \mat{a}^{(m)} }}
    \norm{\tensor{X}-\sum_{m=1}^M \left(\mat{U}^{(m)}
    {\mat{V}^{(m)}}^T\right)\times_3 \mat{a}^{(m)}}^2
    +\\
    \sum_{m=1}^M r\left(\mat{U}^{(m)},\mat{V}^{(m)},\mat{a}^{(m)}\right)
    \label{eq:richcom}
\end{multline}
where $r\left(\mat{U}^{(m)},\mat{V}^{(m)},\mat{a}^{(m)}\right)$ encodes the
sparsity and non-negativity constraints\reminder{ask ekta}. They solve
\eqref{eq:richcom} using the AO-ADMM framework  \cite{huang2016flexible}
which solves for all $\mat{U}^{(m)}$, $\mat{V}^{(m)}$ and $\mat{a}^{(m)}$
individually in an alternating fashion via the alternating direction method
of multipliers \cite{boyd2011distributed}. Lastly, the authors assign the
$k$-th view to the $m$-th view cluster if $\mat{a}^{m}_k \geq
\mat{a}^{n}_k$ for all $n$, and then consider
$\mat{U}^{(m)}{\mat{V}^{(m)}}^T$ as an adjacency matrix representing the
$m$-th view structure and  obtain the corresponding nodes clustering by
extracting its weakly connected components after proper thresholding. 

\paragraph{Centroid-based Multilayer  Network Clustering  (CMNC).} CMNC
\cite{chen2019tensor} also applies a rank-$(L_m, L_m, 1)$ terms
decomposition and its model is very similar to that of Richcom with their
only differences being that for CMNC $\mat{U}^{(m)}=\mat{V}^{(m)}$ for all
$m$,  $\sum_{m=1}^M \mat{a}^{(m)}_i=1$ for all $i$, and no sparsity is
imposed. It is also different in the way the optimization problem was solved
which was reformulated into an unconstrained one by introducing two
differentiable operators, and then a trust region optimization method was
applied \cite{nocedal2006numerical}. Another difference is that CMNC
preprocesses each view of the data tensor as $
{\mat{D}^{(k)}}^{-\frac{1}{2}}\mat{X}^{(k)}{\mat{D}^{(k)}}^{-\frac{1}{2}}$
where $\mat{D}^{(k)}:=\diag{\sum_{i=1}^I\mat{X}^{(k)}_{:i}}$ inspired by the
spectral clustering paradigm \cite{von2007tutorial}. Lastly, another
difference is that the nodes clusters of the $m$-th view structure are
formed by assigning the $i$-th node to the cluster corresponding to the
largest element of the $i$-th row of $\mat{U}^{(m)}$.

\subsubsection{Spectral Clustering.} 
\label{sec:single_view_spec_clus} 
Spectral clustering
\cite{ng2001spectral}, \cite{von2007tutorial}, \cite{zhou2005learning}, \cite{klus2022koopman}
has seen great success and developments in the past decades thanks to its
strong theoretical foundations and its ability to discover clusters of
arbitrary shape. In fact, these developments have led to generalized
versions for multi-view graphs
\cite{liu2012multiview}, \cite{zhou2007spectral}. 

Consider an adjacency matrix $\mat{X}$ of an arbitrary undirected graph with
non-negative weights. If we define
$\mat{D}:=\diag{\sum_{i=1}^I\mat{X}_{:i}}$ then $\mat{L}:=\mat{D}-\mat{X}$
is called the Laplacian of $\mat{X}$. It can be shown \cite{von2007tutorial}
that $\mat{L}$ is positive semi-definite and that the number of connected
components of the graph is equal to the multiplicity of the smallest
eigenvalue of $\mat{L}$, which is always 0. It can also be shown
\cite{ng2001spectral} that if $\mat{U}$ is a matrix whose columns form a
basis for the eigenspace corresponding to the smallest eigenvalue of
$\mat{L}$, then two rows of $\mat{U}$ are collinear if the corresponding
nodes belong to the same connected component, and orthogonal to each other
if the corresponding nodes belong to different components.  Spectral
clustering algorithms then use these properties to cluster the rows of
$\mat{U}$ and identify the communities of a graph in a principled manner.
Additionally,
$\mat{L}_{sym}:={\mat{D}}^{-\frac{1}{2}}\mat{L}{\mat{D}}^{-\frac{1}{2}}$ is
called the normalized Laplacian and inherits all the aforementioned nice
properties of $\mat{L}$. The usefulness of $\mat{L}_{sym}$ is usually
justified in the literature by associating it to a relaxation of the $n$-cut
problem \cite{von2007tutorial}. Lastly, note that the eigenspace of
$\mat{L}_{sym}$ corresponding to an eigenvalue of 0 is identical to the
eigenspace of $\mat{S} := \mat{I}-\mat{L}_{sym} =
{\mat{D}}^{-\frac{1}{2}}\mat{X}{\mat{D}}^{-\frac{1}{2}}$ corresponding to
its maximum eigenvalue which is 1.

Note there is a more direct and intuitive justification for choosing
$\mat{L}_{sym}$ over $\mat{L}$. Specifically, notice that since
$\mat{L}_{sym}$ is positive semi-definite, the eigenvalues of $\mat{S} :=
\mat{I}-\mat{L}_{sym} =
{\mat{D}}^{-\frac{1}{2}}\mat{X}{\mat{D}}^{-\frac{1}{2}}$  will all be less
than or equal to $1$, and since all elements of $\mat{S}$ are non-negative,
it can in turn be shown that its  smallest eigenvalue will also be greater
than or equal to $-1$ \reminder{cite}. Therefore, the eigenvalues of
$\mat{L}_{sym}$ will be bounded between 0 and 2. This property can be
especially useful when the different communities of the graph are not
completely disconnected from each other, in which case we evaluate the
number of communities to be equal to the number of eigenvalues of
$\mat{L}_{sym}$ that are only approximately 0, or the number of eigenvalues
of $\mat{S}$ that are only approximately 1. Another reason is that when
communities are completely disconnected from each other, then
$\mat{L}_{sym}$ can be expressed as a block-diagonal matrix where each block
corresponds to a different community. This implies that the set of the
eigenvalues of $\mat{L}_{sym}$ is the union of the eigenvalues of its
blocks,\reminder{cite} and, therefore, the maximum eigenvalue for all
communities will be 2. In turn, this implies that when communities are not
entirely disconnected from each other, the decision of whether an eigenvalue
is close enough to 0 does not have to involve the size of the corresponding
community.

\subsubsection{Multi-View Spectral Clustering.}
When a graph has multiple views such that all views are assumed to share a
common underlying structure, i.e. $M = 1$, then one of the multi-view
spectral clustering models proposed in
\cite{liu2012multiview}, \cite{zhou2007spectral} can be applied. In our work,
we will be particularly interested in some of the models proposed in
\cite{liu2012multiview}. Specifically, we will study the MC-TR-I model, 
\begin{equation}
    \label{eq:multi_unweighted_appendix}
    \sup_{\mat{U}^T\mat{U}=\mat{I}} \sum_{k=1}^K\tr{\mat{U}^T\mat{S}^{(k)}\mat{U}},
\end{equation}
which aims to perform spectral clustering jointly on all views in a way that
the embeddings for all views are identical to each other. We will also
consider its weighted variant, 
\begin{equation}
    \label{eq:multi_weighted_appendix}
    \sup_{\substack{\mat{U}^T\mat{U}=\mat{I}\\\mat{a}\succeq\mat{0}, \norm{\mat{a}}=1
    }} \sum_{k=1}^K\tr{\mat{U}^T\mat{a}_k\mat{S}^{(k)}\mat{U}},
\end{equation}
which assigns a different weight to each view. To calculate these
models, the authors proposed the MC-TR-I-EVD and MC-TR-I-EVDIT algorithms,
respectively.

Then, in \cite{kumar2011co} the authors propose two forms of  co-regularized
spectral clustering with the first form consisting of a pairwise
co-regularization as
\begin{multline}
    \label{eq:pairwise_coreg_appendix}
    \sup_{{\mat{U}^{(k)}}^T\mat{U}^{(k)}=\mat{I}}
    \sum_{k=1}^K\tr{{\mat{U}^{(k)}}^T\mat{S}^{(k)}\mat{U}^{(k)}}+
    \\
    \lambda \sum_{i\neq j}\tr{\mat{U}^{(i)}{\mat{U}^{(i)}}^T\mat{U}^{(j)}{\mat{U}^{(j)}}^T}
\end{multline}
and the second one consisting of a centroid-based co-regularization as
\begin{multline}
    \label{eq:centroid_coreg_appendix}
    \sup_{\substack{{\mat{U}^{(k)}}^T\mat{U}^{(k)}=\mat{I}\\
    {\mat{U}^*}^T\mat{U}^*=\mat{I}}}
    \sum_{k=1}^K\tr{{\mat{U}^{(k)}}^T\mat{S}^{(k)}\mat{U}^{(k)}}
    +
    \\
    \lambda_k \tr{\mat{U}^{(k)}{\mat{U}^{(k)}}^T\mat{U}^*{\mat{U}^*}^T}.
\end{multline}
Note that both of these formulations can be seen as relaxed versions of
\eqref{eq:multi_unweighted_appendix} where instead the  embeddings from different
views are forced to only be similar to each other instead of exactly equal.
Specifically, the regularization terms force the columnspaces of all
$\mat{U}^{(k)}$ to  become more similar with each other as $\lambda$ and
$\lambda_k$ take larger values. In the limit, their columnspaces become
identical and the  rows  of $\mat{U}^{(k)}$ will be just orthogonally
rotated versions of the rows of any other  $\mat{U}^{(i)}$. Therefore,
applying k-means, which is the clustering method suggested by the authors,
on the rows of any $\mat{U}^{(k)}$ will lead to the same node clusters.

Now notice that we can show that \eqref{eq:multi_weighted_appendix} has the same
optimal set as
\begin{equation}
 \arginf_{
     \substack{\mat{U}^T\mat{U}=\mat{I}\\\mat{A}\succeq\mat{0}, \norm{\mat{A}}=1}
 }
    \norm{\tensor{S}-\cp{U}{U}{AB}{}{}{}}
    \label{eq:multi_view_spectral_appendix}
\end{equation}
where $\tensor{S}_{::k}:=\mat{S}^{(k)}$, $\mat{A}:=\mat{a}$ and
$\mat{B}:=\mat{1}^T$
(see \autoref{sec:multi_view_spectral_proof} for proof).
Similarly, we can show that \eqref{eq:multi_unweighted_appendix} has the
same optimal set as $\inf_{\mat{U}^T\mat{U}=\mat{I}
}\norm{\tensor{S}-\cp{U}{U}{AB}{}{}{}}$ where
$\tensor{S}_{::k}:=\mat{S}^{(k)}$, $\mat{A}:=\mat{1}$ and
$\mat{B}:=\mat{1}^T$. Lastly, note that due to the close connection of
\eqref{eq:pairwise_coreg_appendix} and \eqref{eq:centroid_coreg_appendix}
with \eqref{eq:multi_unweighted_appendix} one can argue that the same
reformulation captures the essence of these problems as well, despite the
fact that they cannot be reformulated exactly like that from a strictly
mathematical point of view.

   \begin{table*}
       \centering
       \begin{tabular}{ |c|c|  }
           \hline
           \textbf{Model Name} & \textbf{Model Definition}\\
           \hline
           Spectral Clustering \cite{ng2001spectral} & \parbox{10cm}{
               \begin{align*}
                   \sup_{\mat{U}^T\mat{U}=\mat{I}} \tr{\mat{U}^T\left(\mat{D}^{-\frac{1}{2}}\mat{X}\mat{D}^{-\frac{1}{2}}\right)\mat{U}}
               \end{align*}
           }
           \\
           \hline
           MC-TR-I-EVD \cite{liu2012multiview} & \parbox{10cm}{
               \begin{align*}
                   \sup_{\mat{U}^T\mat{U}=\mat{I}} \sum_{k=1}^K\tr{\mat{U}^T\left({\mat{D}^{(k)}}^{-\frac{1}{2}}\tensor{X}_{::k}{\mat{D}^{(k)}}^{-\frac{1}{2}}\right)\mat{U}}
               \end{align*}
               }
               \\
               \hline
               MC-TR-I-EVDIT \cite{liu2012multiview} & \parbox{10cm}{\begin{align*}
                   \sup_{\substack{\mat{U}^T\mat{U}=\mat{I}\\\mat{a}\succeq\mat{0}, \norm{\mat{a}}=1
                   }} \sum_{k=1}^K\tr{\mat{U}^T\left(\mat{a}_k{\mat{D}^{(k)}}^{-\frac{1}{2}}\tensor{X}_{::k}{\mat{D}^{(k)}}^{-\frac{1}{2}}\right)\mat{U}}
               \end{align*}
               }
               \\
               \hline
               \parbox{3cm}{\centering Co-regularized\\Spectral Clustering\\
               (pairwise regularization)} \cite{kumar2011co} & \parbox{10cm}{
                   \begin{multline*}
                       \sup_{{\mat{U}^{(k)}}^T\mat{U}^{(k)}=\mat{I}}
                       \sum_{k=1}^K\tr{{\mat{U}^{(k)}}^T\left({\mat{D}^{(k)}}^{-\frac{1}{2}}\tensor{X}_{::k}{\mat{D}^{(k)}}^{-\frac{1}{2}}\right)\mat{U}^{(k)}}
                       \\
                       +\lambda \sum_{i\neq j}\tr{\mat{U}^{(i)}{\mat{U}^{(i)}}^T\mat{U}^{(j)}{\mat{U}^{(j)}}^T}
                   \end{multline*}
                   }
               \\
               \hline
               \parbox{3cm}{\centering Co-regularized\\Spectral
               Clustering\\(centroid-based regularlization)} \cite{kumar2011co} & \parbox{10cm}{
                   \begin{multline*}
                       \sup_{\substack{{\mat{U}^{(k)}}^T\mat{U}^{(k)}=\mat{I}
                       \\
                       {\mat{U}^*}^T\mat{U}^*=\mat{I}}}
                       \sum_{k=1}^K\tr{{\mat{U}^{(k)}}^T\left({\mat{D}^{(k)}}^{-\frac{1}{2}}\tensor{X}_{::k}{\mat{D}^{(k)}}^{-\frac{1}{2}}\right)\mat{U}^{(k)}}
                       \\
                       +\lambda_k\tr{\mat{U}^{(k)}{\mat{U}^{(k)}}^T\mat{U}^*{\mat{U}^*}^T}
                   \end{multline*}
                   }
                   \\
                   \hline
                   ComClus \cite{ni2017comclus} & \parbox{10cm}{ 
                   \begin{multline*}
                       \inf_{\mat{U},\mat{W},\mat{A},\mat{B}\geq 0} \sum_{k=1}^K \norm{\mat{X}_{::k}- \mat{U} \diag{\mat{W}_{k:} }\mat{U}^T }^2\\+\beta||\mat{W}-\mat{A}\mat{B}||^2+\rho(||\mat{U}||_1+||\mat{A}||_1+||\mat{B}||_1)
                   \end{multline*}}\\
                   \hline
                   CMNC \cite{chen2019tensor} & \parbox{10cm}{ \begin{gather*}
                       \inf_{\substack{\mat{U}^{(m)}, \mat{A}\geq 0 \\ \mat{A1}=\mat{1}} } \norm{\tensor{Y}-\sum_{m=1}^M \left(\mat{U}^{(m)} {\mat{U}^{(m)}}^T\right)\times_3 \mat{A}_{:m}}^2
                       \\
                       \text{where}
                       \\
                       \tensor{Y}_{::k}:={\mat{D}^{(k)}}^{-\frac{1}{2}}\tensor{X}_{::k}{\mat{D}^{(k)}}^{-\frac{1}{2}}
                   \end{gather*}
                   }
                   \\
                   \hline
                   Richcom \cite{gujral2020beyond} & \parbox{10cm}{
                       \begin{multline*}
                           \inf_{\substack{\mat{U}^{(m)},\mat{V}^{(m)},\\ \mat{a}^{(m)} }} \norm{\tensor{X}-\sum_{m=1}^M \left(\mat{U}^{(m)} {\mat{V}^{(m)}}^T\right)\times_3 \mat{a}^{(m)}}^2
                           \\+
                           \sum_{m=1}^M r\left(\mat{U}^{(m)},\mat{V}^{(m)},\mat{a}^{(m)}\right)
                       \end{multline*}
                       where $r\left(\mat{U}^{(m)},\mat{V}^{(m)},\mat{a}^{(m)}\right)$ encodes sparsity and non-negativity\\}
                       \\
                       \hline
       \end{tabular}
       \caption{Mathematical formulations of various graph clustering methods.}
       \label{tb:methods}
   \end{table*}

\subsection{Proofs}

\subsubsection{Derivation of \eqref{eq:multi_view_spectral}.}
\label{sec:multi_view_spectral_proof}
{
\allowdisplaybreaks
\begin{align*}
        &\argsup_{\substack{\mat{U}^T\mat{U}=\mat{I}\\\mat{a}\succeq\mat{0}, \norm{\mat{a}}=1
    }} \tr{\mat{U}^T\sum_{k=1}^K\mat{a}_k\mat{S}^{(k)}\mat{U}} =
    \\
    &\arginf_{\substack{\mat{U}^T\mat{U}=\mat{I}\\\mat{a}\succeq\mat{0}, \norm{\mat{a}}=1
    }} - 2\tr{\mat{U}^T\sum_{k=1}^K\mat{a}_k\mat{S}^{(k)}\mat{U}}+R = 
    \\
    &\arginf_{\substack{\mat{U}^T\mat{U}=\mat{I}\\\mat{a}\succeq\mat{0}, \norm{\mat{a}}=1
    }}\sum_{k=1}^K -2\tr{\mat{U}^T\mat{a}_k\mat{S}^{(k)}\mat{U}}+ {\mat{a}_k}^2 R =
    \\
    &\arginf_{\substack{\mat{U}^T\mat{U}=\mat{I}\\\mat{a}\succeq\mat{0}, \norm{\mat{a}}=1
    }}\sum_{k=1}^K \tr{-2\mat{a}_k\mat{U}\mat{U}^T\mat{S}^{(k)}+ \left(\mat{a}_k \mat{U}\mat{U}^T\right)^2} =
    \\
    &\arginf_{\substack{\mat{U}^T\mat{U}=\mat{I}\\\mat{a}\succeq\mat{0}, \norm{\mat{a}}=1
    }}\sum_{k=1}^K\norm{\mat{S}^{(k)}-\mat{a}_k \mat{U}\mat{U}^T}^2 =
    \\
    &\arginf_{\substack{\mat{U}^T\mat{U}=\mat{I}\\\mat{A}\succeq\mat{0}, \norm{\mat{A}}=1
    }}\norm{\tensor{S}-\cp{U}{U}{AB}{}{}{}}
\end{align*}
}
where $\tensor{S}_{::k}:=\mat{S}^{(k)}$, $\mat{A}:=\mat{a}$ and $\mat{B}:=\mat{1}^T$.

\subsubsection{Best Positive Semi-Definite Approximation.}
\label{sec:psd_approx_proof}
\begin{theorem}
    \label{th:psd_approx}
    If $\mat{Y} \in \mathbb{R}^{I \times I}$ is a symmetric matrix with $E$
    non-negative eigenvalues, then a
    positive semi-definite matrix, $\mat{S} \in \mathbb{R}^{I \times I}$,
    that minimizes $\norm{\mat{Y}-\mat{S}}$ such that $\rank{\mathbf{S}}\leq
    R$ has an eigenvalue decomposition consisting of the largest
    $\max\{R,E\}$ non-negative eigenvalues of $\mat{Y}$ and their
    corresponding eigenvectors.
\end{theorem}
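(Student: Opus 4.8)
The plan is to reduce everything to two standard ingredients: the Eckart--Young theorem \cite{eckart1936approximation} for best Frobenius low-rank approximation, and the elementary fact that $\tr{\mat{P}\mat{Q}}\leq 0$ whenever $\mat{P}\preceq\mat{0}\preceq\mat{Q}$. First I would diagonalize $\mat{Y}=\sum_{i=1}^{I}\lambda_i\mat{v}_i\mat{v}_i^T$ with an orthonormal eigenbasis and $\lambda_1\geq\cdots\geq\lambda_I$, and write $\mat{Y}=\mat{Y}_{+}+\mat{Y}_{-}$ where $\mat{Y}_{+}:=\sum_{\lambda_i>0}\lambda_i\mat{v}_i\mat{v}_i^T\succeq\mat{0}$ is assembled from the $P$ strictly positive eigenpairs and $\mat{Y}_{-}:=\sum_{\lambda_i<0}\lambda_i\mat{v}_i\mat{v}_i^T\preceq\mat{0}$ from the strictly negative ones. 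Since these two pieces live on orthogonal eigenspaces, $\langle\mat{Y}_{+},\mat{Y}_{-}\rangle=0$ and $\norm{\mat{Y}}^2=\norm{\mat{Y}_{+}}^2+\norm{\mat{Y}_{-}}^2$; I would exploit this orthogonality throughout.

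The crux is a lower bound holding for every PSD $\mat{S}$. Expanding, $\norm{\mat{Y}-\mat{S}}^2=\norm{\mat{Y}_{-}}^2+\norm{\mat{Y}_{+}-\mat{S}}^2+2\langle\mat{Y}_{-},\mat{Y}_{+}-\mat{S}\rangle$, and the cross term equals $2\langle\mat{Y}_{-},\mat{Y}_{+}\rangle-2\tr{\mat{Y}_{-}\mat{S}}=-2\tr{\mat{Y}_{-}\mat{S}}\geq 0$ because $\mat{Y}_{-}\preceq\mat{0}\preceq\mat{S}$. Hence $\norm{\mat{Y}-\mat{S}}^2\geq\norm{\mat{Y}_{-}}^2+\norm{\mat{Y}_{+}-\mat{S}}^2$. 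Now impose $\rank{\mat{S}}\leq R$: the ordered eigenvalues of $\mat{Y}_{+}$ are $\lambda_1,\dots,\lambda_P,0,\dots,0$, so the Eckart--Young theorem gives $\norm{\mat{Y}_{+}-\mat{S}}^2\geq\sum_{i=\min\{R,P\}+1}^{P}\lambda_i^2$ (an empty sum when $R\geq P$). Combining, $\norm{\mat{Y}-\mat{S}}^2\geq\norm{\mat{Y}_{-}}^2+\sum_{i=\min\{R,P\}+1}^{P}\lambda_i^2$ for every feasible $\mat{S}$.

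To see the bound is tight I would exhibit $\mat{S}^{\star}:=\sum_{i=1}^{\min\{R,P\}}\lambda_i\mat{v}_i\mat{v}_i^T$: it is PSD, has rank $\min\{R,P\}\leq R$, and its eigendecomposition consists of the largest $\min\{R,E\}$ non-negative eigenpairs of $\mat{Y}$ (the trailing $\min\{R,E\}-\min\{R,P\}$ of which are zero eigenvalues and hence immaterial to the matrix itself). Plugging it in, its support is orthogonal to that of $\mat{Y}_{-}$, so the cross term vanishes, while $\norm{\mat{Y}_{+}-\mat{S}^{\star}}^2=\sum_{i=\min\{R,P\}+1}^{P}\lambda_i^2$ attains the Eckart--Young value for $\mat{Y}_{+}$; thus all the inequalities above collapse to equalities and $\mat{S}^{\star}$ is a minimizer. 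Following the equality conditions --- $\tr{\mat{Y}_{-}\mat{S}}=0$ forces $\ra{\mat{S}}\subseteq\ra{\mat{Y}_{-}}^{\perp}$, and equality in Eckart--Young pins down the retained eigenspace --- one further checks that every minimizer has this form, up to the usual freedom of basis inside a repeated eigenspace.

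The step I expect to demand the most care is the cross-term argument, since it is what legitimizes the whole splitting: one has to observe simultaneously that the positive/negative decomposition annihilates $\langle\mat{Y}_{+},\mat{Y}_{-}\rangle$ and that $\tr{\mat{Y}_{-}\mat{S}}\leq 0$ for all PSD $\mat{S}$, whereupon the objective separates into an unavoidable $\norm{\mat{Y}_{-}}^2$ penalty for shedding the negative spectrum plus a plain rank-constrained approximation of the non-negative part $\mat{Y}_{+}$, to which Eckart--Young applies directly. The rest --- bookkeeping among $P$, $E$, $R$ and the harmless zero eigenpairs --- is routine.
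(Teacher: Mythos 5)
Your proof is correct, but it takes a genuinely different route from the paper's. The paper parametrizes $\mat{S}=\mat{V}\mat{\Sigma}\mat{V}^T$ by its eigendecomposition, rewrites the objective as $\tfrac{1}{2}\tr{\mat{S}^2}-\tr{\mat{Y}^T\mat{S}}$, invokes the von~Neumann--Theobald trace inequality $\tr{\mat{U}\mat{\Lambda}\mat{U}^T\mat{V}\mat{\Sigma}\mat{V}^T}\leq\sum_i\mat{\Lambda}_{ii}\mat{\Sigma}_{ii}$ to decouple the eigenvector choice (optimal at $\mat{V}=\mat{U}$) from the eigenvalue choice, and then solves a separable scalar problem in each $\mat{\Sigma}_{ii}\geq 0$, obtaining $\mat{\Sigma}_{ii}=\max\{\mat{\Lambda}_{ii},0\}$ for $i\leq R$ and $0$ otherwise. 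You instead split $\mat{Y}=\mat{Y}_{+}+\mat{Y}_{-}$, kill the cross term via $\tr{\mat{Y}_{-}\mat{S}}\leq 0$ for $\mat{S}\succeq\mat{0}$, and reduce to Eckart--Young on $\mat{Y}_{+}$. Both arguments are sound and land on the same answer --- the retained spectrum is the largest $\min\{R,E\}$ non-negative eigenpairs, so like the paper's own proof you are implicitly reading the theorem's ``$\max\{R,E\}$'' as a typo for $\min\{R,E\}$. What your route buys: the only nontrivial external ingredient is Eckart--Young (already cited elsewhere in the paper), the decomposition of the cost into an unavoidable $\norm{\mat{Y}_{-}}^2$ penalty plus a plain low-rank approximation is geometrically transparent, and the equality conditions ($\tr{\mat{Y}_{-}\mat{S}}=0$ forcing $\ra{\mat{S}}\perp\ra{\mat{Y}_{-}}$, plus the Eckart--Young equality case) give you a characterization of \emph{all} minimizers, which the paper's construction does not attempt. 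What the paper's route buys: it handles the rank and positivity constraints in one shot through a single trace inequality and a one-dimensional minimization, with no case analysis on $P$ versus $R$; and since Frobenius-norm Eckart--Young is itself typically proved via the same trace inequality, the logical depth of the two arguments is ultimately comparable.
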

\begin{proof}
    If we denote the eigenvalue decompositions of $\mat{Y}$ and $\mat{S}$ as
    $\mat{U}\mat{\Lambda}\mat{U}^T$ and $\mat{V}\mat{\Sigma}\mat{V}^T$,
    respectively, such that the eigenvalues are sorted in a descending order,
    then we have 
    \begin{equation*}
        \arginf_{\substack{
            \mat{S}\succeq\mat{0}\\
            \rank{\mat{S}}\leq R
        }} \norm{\mat{Y}-\mat{S}}=
        \arginf_{\substack{
            \mat{S}\succeq\mat{0}\\
            \rank{\mat{S}}\leq R
        }} \frac{1}{2}\tr{\mat{S}^2}-\tr{\mat{Y}^T \mat{S}}
    \end{equation*}
    or equivalently
    \begin{align*}
        \arginf_{\substack{
            \mat{V}^T\mat{V}=\mat{I}\\
            \rank{\mat{\Sigma}}\leq R\\
            \mat{\Sigma}_{ii} \geq 0\\
            \forall i\neq j, \mat{\Sigma}_{ij}=0}} \frac{1}{2}\sum_{i=1}^I\mat{\Sigma}_{ii}^2-\tr{\mat{U}\mat{\Lambda}\mat{U}^T\mat{V}\mat{\Sigma}\mat{V}^T}.
    \end{align*}
    From \cite{theobald1975inequality} we know that $\tr{\mat{U}\mat{\Lambda}\mat{U}^T\mat{V}\mat{\Sigma}\mat{V}^T} \leq \sum_{i=1}^I\mat{\Lambda}_{ii}\mat{\Sigma}_{ii}$, and notice that the equality can be achieved if we set $\mat{V}=\mat{U}$. Then, we can find an optimal $\mat{\Sigma}$ by solving
    \begin{align*}
        \arginf_{\substack{
            \rank{\mat{\Sigma}}\leq R\\
            \mat{\Sigma}_{ii} \geq 0\\
            \forall i\neq j, \mat{\Sigma}_{ij}=0}}
        \sum_{i=1}^I\frac{1}{2}\mat{\Sigma}_{ii}^2-\mat{\Lambda}_{ii}\mat{\Sigma}_{ii}.
    \end{align*}
    To this end, notice first that for any $i$ we have
    \begin{equation*}
        \inf_{
            \mat{\Sigma}_{ii} \geq 0}
        \frac{1}{2}\mat{\Sigma}_{ii}^2-\mat{\Lambda}_{ii}\mat{\Sigma}_{ii} = 
        \left\{
            \begin{array}{ll}
                -\frac{1}{2}\mat{\Lambda}_{ii}^2 & \text{if } \mat{\Lambda}_{ii}\geq 0\\
                0 & \text{if } \mat{\Lambda}_{ii}< 0
            \end{array}
            \right.
    \end{equation*}
    and, therefore, the optimal $\mat{\Sigma}$ is given by setting
    \begin{equation*}
        \mat{\Sigma}_{ii} = 
        \left\{
            \begin{array}{ll}
                \max \left\{\mat{\Lambda}_{ii},0 \right\} & \text{if } i\leq R\\
                0 & \text{if } i > R
            \end{array}
            \right.
    \end{equation*}
\end{proof}

\subsubsection{Derivation of \eqref{eq:proposed1_2}.}
\label{sec:proposed1_2_proof}

\begin{flalign}
\nonumber
     \arginf_{\substack{{\mat{U}^{(m)}}^T\mat{U}^{(m)}=\mat{I} \\ \mat{B}^T\in \mathcal{I}}} 
     & 
     \sum_{m=1}^M \sum_{k=1}^K\norm{\tensor{Y}_{::k}-\mat{A}_{km}
     \mat{U}^{(m)}{\mat{U}^{(m)}}^T}^2=
     &
    \\\nonumber
    \arginf_{\substack{{\mat{U}^{(m)}}^T\mat{U}^{(m)}=\mat{I}\\ \mat{B}^T\in \mathcal{I}}}
    &
        \sum_{m=1}^M \sum_{k=1}^K
 \tr{\left(\mat{A}_{km}\mat{U}^{(m)}{\mat{U}^{(m)}}^T\right)^2}+
    &
    \\
       \noalign{\hfill $\displaystyle -2\tr{\mat{A}_{km}\mat{U}^{(m)}{\mat{U}^{(m)}}^T\tensor{Y}_{::k}}=$}\nonumber
     \arginf_{\substack{{\mat{U}^{(m)}}^T\mat{U}^{(m)}=\mat{I}\\ \mat{B}^T\in \mathcal{I}}}
    &
     \sum_{m=1}^M\sum_{k=1}^K 
     \tr{{\mat{A}_{km}}^2{\mat{U}^{(m)}}^T\mat{U}^{(m)}}+
     &
     \\
     \noalign{\hfill $\displaystyle -2\tr{{\mat{U}^{(m)}}^T \mat{A}_{km} \tensor{Y}_{::k}\mat{U}^{(m)}}=$}\nonumber
     \argsup_{\substack{{\mat{U}^{(m)}}^T\mat{U}^{(m)}=\mat{I}\\ \mat{B}^T\in \mathcal{I}}}
     &
     \sum_{m=1}^M \tr{{\mat{U}^{(m)}}^T \mat{Z}^{(m)}\mat{U}^{(m)}}
     &
\end{flalign}
where  $\mat{Z}^{(m)}:=\sum_{k=1}^K 2\mat{A}_{km}\tensor{Y}_{::k}-\norm{\mat{A}_{:m}}^2\mat{I}$.

\subsubsection{Derivation of \eqref{eq:proposed2}.}
\label{sec:proposed2_proof}
\begin{flalign*}
    \arginf_{\substack{{\mat{U}^{(m)}}^T\mat{U}^{(m)}=\mat{I} \\ \mat{B}^T\in \mathcal{I}}}
    &
    \sum_{m=1}^M \sum_{k=1}^K \norm{\tensor{Y}_{::k}-\mat{A}_{km}\mat{U}^{(m)}\mat{D}_{\mat{B}}^{(m)}{\mat{U}^{(m)}}^T}^2=
    &
    \\
    \arginf_{\substack{{\mat{U}^{(m)}}^T\mat{U}^{(m)}=\mat{I}\\ \mat{B}^T\in \mathcal{I}}}
    &
    \sum_{m=1}^M \sum_{k=1}^K \tr{{\mat{A}_{km}}^2{\mat{D}_{\mat{B}}^{(m)}}^2}+ 
    &
    \\ 
    \noalign{\hfill $\displaystyle \tr{-2\mat{A}_{km}\mat{U}^{(m)}\mat{D}_{\mat{B}}^{(m)}{\mat{U}^{(m)}}^T\tensor{Y}_{::k}}=$}
    \arginf_{\substack{{\mat{U}^{(m)}}^T\mat{U}^{(m)}=\mat{I}\\ \mat{B}^T\in \mathcal{I}}}
    &
    \sum_{m=1}^M \tr{\norm{\mat{A}_{:m}}^2{\mat{D}_{\mat{B}}^{(m)}}^2}+
    &
    \\
    \noalign{\hfill $\displaystyle \tr{-2\mat{U}^{(m)}\mat{D}_{\mat{B}}^{(m)}{\mat{U}^{(m)}}^T \sum_{k=1}^K \mat{A}_{km}\tensor{Y}_{::k}}=$}
    \arginf_{\substack{{\mat{U}^{(m)}}^T\mat{U}^{(m)}=\mat{I}\\ \mat{B}^T\in \mathcal{I}}}
    &
    \sum_{m=1}^M \tr{\norm{\mat{A}_{:m}}^2{\mat{D}_{\mat{B}}^{(m)}}^2}+
    &
    \\
    \noalign{\hfill $\displaystyle \tr{-2\norm{\mat{A}_{:m}}\mat{U}^{(m)}\mat{D}_{\mat{B}}^{(m)}{\mat{U}^{(m)}}^T \mat{Z}^{(m)}}=$}
    \arginf_{\substack{{\mat{U}^{(m)}}^T\mat{U}^{(m)}=\mat{I} \\ \mat{B}^T\in \mathcal{I}}}
    &
    \sum_{m=1}^M \norm{\mat{Z}^{(m)}-\norm{\mat{A}_{:m}}\mat{U}^{(m)}\mat{D}_{\mat{B}}^{(m)}{\mat{U}^{(m)}}^T}^2
    &
\end{flalign*}
where $\mat{Z}^{(m)}:=\sum_{k=1}^K\mat{A}_{km}\tensor{Y}_{::k}/\norm{\mat{A}_{:m}}$.

\subsection{Derivation of Space \& Time Complexity} 
\label{app:space_time_complexity}

Note that $R\leq MI$ and that the following discussion applies for any
combination of constraints on $\mat{A}$ an $\mat{B}$.

\paragraph{Space complexity.}

First notice that memory usage is maximized during the computation of
$\mat{A}$, which requires $\mat{U}$, $\mat{B}$ and $\cp{U}{U}{B}{}{}{}$ to
be available. $\mat{A}$ requires space $O\left(K\right)$ since each of its
rows has only one non-zero element, and similarly $\mat{B}$ requires space
$O\left(R\right)$. Also, $\mat{U}$ and $\cp{U}{U}{B}{}{}{}$ require space
$O\left(RI\right)$ and $O\left(MI^2\right)$, respectively. Then, during the
calculation of $\mat{B}$, we also need space $O\left(MI^2\right)$ to store
all $\mat{Z}^{(m)}$ along with their eigendecompositions which are
calculated sequentially.  However, this does not increase the total space
complexity since the existing $\cp{U}{U}{B}{}{}{}$ that was stored during
the previous calculation of $\mat{A}$ can be discarded and overwritten.
However, the eigenvalues of all $\mat{Z}^{(m)}$ will still need to all be
simultaneously stored, requiring an additional space of $O\left(MI\right)$.
Therefore, the total space complexity of our method is
$O\left(MI^2+K\right)$.

\paragraph{Time complexity.}

First we see that forming $\mat{Z}^{(m)}$ requires time $O\left(
\nnz{\mat{A}_{:m}} I^2\right)$, and, therefore, forming all of them requires
time $O\left( KI^2\right)$. Then, to  update $\mat{U}$ and $\mat{B}$, we
need the eigendecompositions of $\mat{Z}^{(m)}$ along with their sorted
combined eigenvalues, which require a time complexity of
$O\left(MI^3\right)$ and $O\left(MI\log\left(MI\right)\right)$,
respectively. Then, before updating $\mat{A}$ we need to compute
$\cp{U}{U}{B}{}{}{}$, and note that
$\cp{U}{U}{B}{}{}{}_{::m}=\mat{U}\diag{\mat{B}_{m:}}\mat{U}^T$ which costs
$O\left(\nnz{\mat{B}_{m:}}I^2\right)$. Therefore, the total time complexity
for computing $\cp{U}{U}{B}{}{}{}$ will be $O\left(RI^2\right)$. Next, we
are able to update $\mat{A}$ which costs $O\left(KMI^2\right)$. Thus, for
$t$ iterations the total time complexity of our method is
$O\left(tMI^3+tKMI^2+tMI\log\left(MI\right)\right)$.

\paragraph{Optimizations.}

Given the sparsity of a wide range of real-world graphs, the space
complexity of each $\mat{Z}^{(m)}$ can be greatly reduced to
$O\left(\nnz{\tensor{X}}\right)$ if a coordinate list representation is used
instead. In this case, the time complexity of each $\mat{Z}^{(m)}$ can also
be reduced to $O\left( \nnz{\mat{A}_{:m}} \nnz{\tensor{X}}\right)$, and,
therefore, the total time complexity of forming all of them will be $O\left(
K \nnz{\tensor{X}}\right)$. In fact, all $\mat{Z}^{(m)}$ can be calculated
in parallel, which although does not improve the worst-case time complexity,
may be very beneficial in practice.

Then, notice that, out of all eigenpairs of all $\mat{Z}^{(m)}$, we only
need the ones corresponding to the $R$ largest combined eigenvalues in order
to calculate $\mat{U}$ and $\mat{B}$. Therefore, we will only need the
eigenpairs corresponding to at most the $R$ largest eigenvalues of each
$\mat{Z}^{(m)}$ as well. Thus, when $R<I$, the computation cost of these
eigenpairs can be greatly reduced if computed by an appropriate eigensolver
that is also able to take advantage of sparsity
\cite{grimes1994shifted}\reminder{give exact complexity or references}. In
this case, the space complexity of each eigendecomposition will be reduced
to $O\left(IR\right)$ as well.  Additionally, calculating the eigenpairs of
different $\mat{Z}^{(m)}$ in parallel can further dramatically reduce their
total computation cost. Lastly, notice that using a min-heap can further
reduce the cost of finding the $R$ largest eigenvalues to $O\left(MR\log
R\right)$.

\subsection{Experiment Setup Details.}
\label{sec:experiment_details}
\reminder{split into two paragraphs}
Despite the fact that the embedding calculation method is the main novel
algorithmic contribution in the original papers where the baseline methods were
proposed, their authors have made algorithmic choices for the other segments as
well. However, it is not clear whether and why these choices would lead to
optimal clustering performance. Therefore, since we are mostly interested in
comparing the quality of the embeddings produced by each method, we define
enhanced versions for all methods by considering combinations of their
embeddings calculation method with various methods of data preprocessing,
embeddings postprocessing and embeddings clustering. Then, the combination that
produces the best clustering performance is reported as the performance of each
enhanced method. Specifically, for data preprocessing we consider both raw data
and the normalized Laplacians of the graph views. We use the normalized
Laplacians for directed graphs as defined in \cite{zhou2005learning} which is a
direct extension of the normalized Laplacians for undirected graphs we
discussed in \autoref{sec:spec_clus} and in more detail in
\autoref{sec:single_view_spec_clus}. Then, for the node embeddings
of the $n$-th calculated view cluster, we consider both their raw unnormalized
form and their normalization as either $\mat{U}\diag{\mat{B}_{n:}}$ or
$\mat{U}\diag{\mat{C}_{n:}}$ with $\mat{C}_{ij}:=\sqrt{|\mat{B}_{ij}|}$. After
this, embeddings may optionally be further normalized to be unit vectors.
Finally, we consider k-means, maximum likelihood and inner product thresholding
for clustering the embeddings.

Based on the theoretical arguments we made in \autoref{sec:MSSC}, we define
original \method{} to use normalized Laplacians for the preprocessing step, and
a non-negative constraint for $\mat{A}$ and $\mat{B}$. However, we still allow
it to be paired with any embedding post-processing and clustering method, in
the same way that enhanced \method{} does. Note that, although this design
choice may seem that it could prove an unfair advantage of \method{} over the
baselines, we believe that it still was the right decision for two main
reasons. First, as with the other baselines, we have no robust theoretical
justification for choosing a specific combination of embedding postprocessing
and clustering method. Second, we noticed in our experiments that the
performance of our method was not significantly affected by altering this
combination.

Lastly, we would also like to point out that due to potential issues that we
think we may have discovered with the optimization steps proposed in the
original paper of Richcom, we decided to consider a slightly different
variant in our experiments.  Specifically, we propose Symmetric Richcom,
which uses the same methods of preprocessing, postprocessing and embeddings
clustering as the original Richcom, but has $\mat{V}$ constrained to be
identical to $\mat{U}$. In \autoref{sec:richcom_solver}, we further explain
how to implement a solver for this Richcom variant.

\subsubsection{Software \& Hardware Specifications.}
\label{sec:soft_and_hard}
All experiments related to clustering performance were run on a machine with
two Intel\textsuperscript{\textregistered}
Xeon\textsuperscript{\textregistered} E5-2680 v3 processors, 378GB of RAM
and MATLAB 9.7.0.1190202 (R2019b). All timing experiments were run on MATLAB
9.7.0.1737446 (R2019b) Update 9 on a machine with an AMD
Ryzen\textsuperscript{\texttrademark} 5 5600 processor overclocked at
4.75GHz paired with 16GB of DDR4 dual-channel SDRAM and a 1600MHz memory
clock. Lastly, in all experiments the mtimex routine \cite{mtimesx} was used
for faster matrix multiplications.

\subsubsection{A Solver for Symmetric Richcom.}
\label{sec:richcom_solver}

We propose updating $\mat{U}$ and $\mat{A}$ according to the solver
described in \cite{seung2001algorithms}. In fact, this is the solver used
by ComClus for its updates of  $\mat{U}$, $\mat{A}$ and $\mat{B}$. This
provides further motivation for deriving the updates of Symmetric Richcom
in this manner in the sense that it may help reduce the impact the solvers
can have on the differences in the performance of the two models. To avoid
a lengthy derivation of the updates, notice that we can very easily deduce
them by directly modifying the solver of ComClus. Specifically, when
solving for $\mat{U}$ in symmetric Richcom we use the same update as in
ComClus but we substitute $\mat{W}$ with $\mat{AB}$. Similarly, when
solving for $\mat{A}$ we use the same update as in ComClus but we
substitute $\mat{W}$ with $\mat{Y}_{(3)}$ and $\mat{B}$ with
$\mat{B}(\mat{U}\odot\mat{U})^T$. Also, note that such a solver is also
capable of imposing sparsity and involves the same parameter $\rho$ as
ComClus.

\subsubsection{Clustering Quality on Artificial Data.}
\label{sec:artif_clust_perf_appendix}

\paragraph{Embeddings Calculation.}

Note that the following parametrization applies to both the original and the
enhanced methods, unless stated otherwise. First, we calculate 100 samples
of $\mat{U}$, $\mat{A}$ and $\mat{B}$ by modeling the data tensor as
$\cp{U}{U}{AB}{}{}{}$, where for each sample we consider a different random
initialization for $\mat{U}$, $\mat{A}$ and $\mat{B}$ and a different
instance of the same data tensor.  $R$ can range from 6 to 10, $M$ is set to
3, the convergence criterion is always defined to be the relative change in
the value of the objective function of each model, the convergence threshold
for the outer loop can be either $1e-3$, $1e-6$ or $1e-9$, and the maximum
number of iterations for the outer loop is set to 1000. Note that apart from
ComClus the other models consist of only one loop. Regarding enhanced
\method{}, we allow all nine possible combinations of constraints on
$\mat{A}$ and $\mat{B}$ discussed in \autoref{sec:optim_steps}, which is in
contrast with original \method{} which we defined to have only non-negative
constraints. Also, note we do not make use of any of the optimizations
discussed in \autoref{sec:space_time_complexity} and in more detail in
\autoref{app:space_time_complexity}.  For ComClus, $\rho$ can use values in
the set of the 6 equally spaced values from 0 to 0.16, $\beta$ can use
values in the set of the 6 equally spaced values from 0.01 to 0.9145, while
the threshold of convergence criterion of its inner loop is set to $1e-6$.
For CMNC we set its delta to 1, while for Symmetric Richcom $\rho$ can take
values in the set of the 6 equally spaced values from 0 to 0.2. Also note
that both CMNC and Symmetric Richcom require a fixed user-specified
$\mat{B}$, which in our experiments we always set it to reflect the ground
truth structure as closely as possible. Lastly, for both CMNC and Symmetric
Richcom, and when both $M$ and $R$ are larger than or equal to their ground
truth values, we begin by creating the $\mat{B}$ of optimal dimensions and
structure. Then, any remaining rows are added as all-zeros, and in turn any
remaining columns are added as random indicator vectors.  Also, when $M$ or
$R$ is less than the ground truth, then an appropriate number of rows or
columns, respectively, is removed randomly.

\paragraph{Matching calculated and ground truth view clusters.}

After the model is calculated, we assign the $m$-th view to the $n$-th
cluster when the $n$-th element of the $m$-th row of $\mat{A}$ has the
largest magnitude among all elements of that row. The reason for this is
that $\cp{U}{U}{AB}{}{}{}=\cp{U}{U}{B}{}{}{}\times_3 \mat{A}$ which implies
that the $m$-th view will be reconstructed mostly based on the $n$-th
frontal slice of $\cp{U}{U}{B}{}{}{}$. Then, we are matching the calculated
view clusters to the ground truth view clusters so that we can in turn
compare the calculated nodes clusters with the appropriate ground truth
nodes clusters. To this end, for each calculated view cluster we create a
membership vector containing either a $1$ or a $0$ at the $m$-th position if
the $m$-th view belongs or does not belong, respectively, to that cluster,
and then we normalize it to have unit norm. Similarly, we also create
membership vectors for the ground truth view clusters and we match each
calculated view cluster to the ground truth view cluster with which the
inner product of their corresponding membership vectors is maximized.

\paragraph{Clustering Performance Evaluation.}

To assess the performance of each method we will use the Adjusted Mutual
Information (AMI). AMI is an adjusted version of the Normalized Mutual
Information (NMI) designed to mitigate the flaw of NMI of getting larger
values as the number of clusters gets closer to the total number of
samples\reminder{cite}. Note that, although not reported here, in our
experiments NMI and the Adjusted Rand Index (ARI) produced very similar
results to AMI. The only notable difference is that sometimes the original
Symmetric Richcom gives better ARI score than the original CMNC, which is
not the case with AMI. However, since, as we also confirm from the
experiments in \autoref{sec:artif_cluster_perf:results}, Symmetric Richcom
and CMNC have the worst performance among all methods, the presentation of
NMI and ARI scores is omitted.

\subsubsection{Real-World Case Study.}
\label{sec:real_world_1}

We preprocess the data by removing all airlines that offered less than 100
flights and then all airports corresponding to less than 30 flights, and we
repeat this process until no further airline or airport is removed. This
leaves us with 235 airports and 61 airlines whose flight counts are then
organized into a tensor of size $235\times 235 \times 61$. Note that since
each of the 61 frontal slices of this tensor corresponds to the flight
network of a specific airline, while the view clusters in this case will
represent clusters of airlines. Similarly, the 235 rows and columns of each
frontal slice correspond to the different airports between which the
corresponding airline has been flying. Therefore, each node clustering will
correspond to a clustering of airports.

\begin{figure*}[!htbp]
    \centering
    \subcaptionbox{Hour clusters\label{fig:real_mine_views}}[0.15\textwidth]{
        \includegraphics[clip,trim=6cm 0cm 6cm 0cm,height=0.425\textwidth]{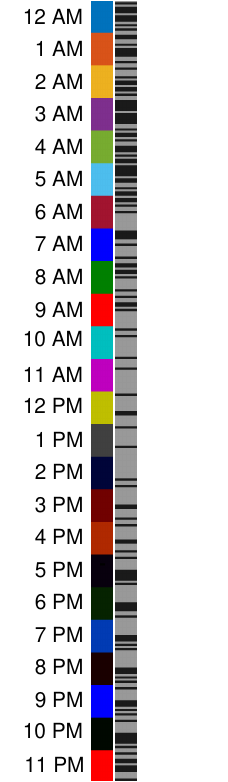}
    }%
	\hfill
    \subcaptionbox{Clusters of people during the day\label{fig:real_mine_nodes_2}}[0.425\textwidth]{
        \includegraphics[clip,trim=1.5cm 0cm 1.5cm 0cm,height=0.425\textwidth]{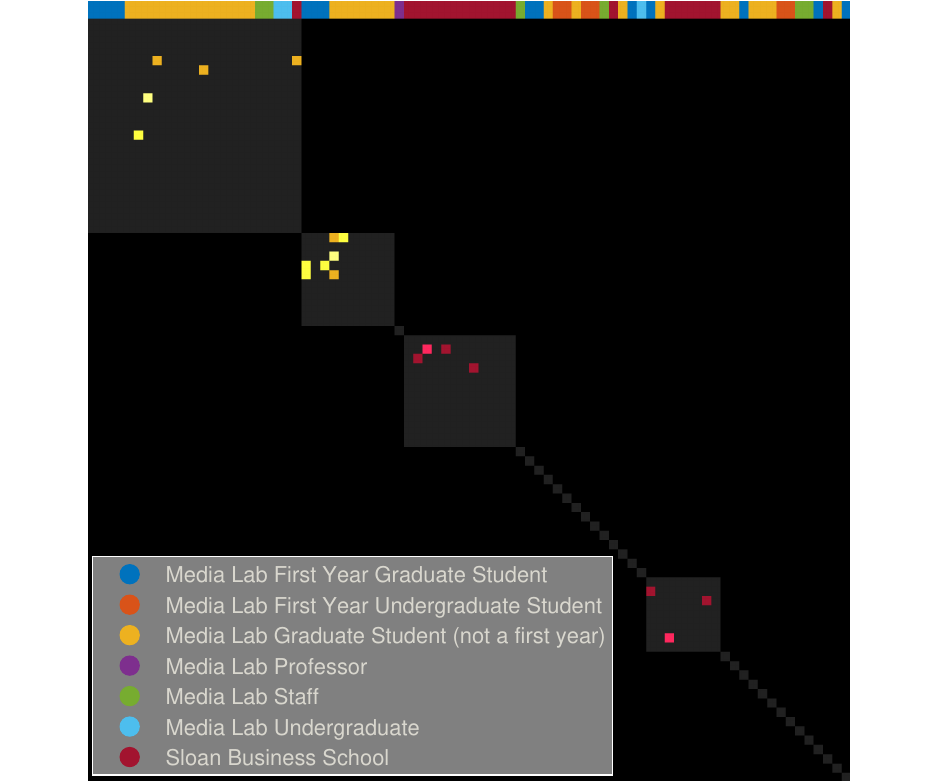}
    }%
	\hfill
    \subcaptionbox{Clusters of people during the night\label{fig:real_mine_nodes_1}}[0.425\textwidth]{
        \includegraphics[clip,trim=1.5cm 0cm 1.5cm 0cm,height=0.425\textwidth]{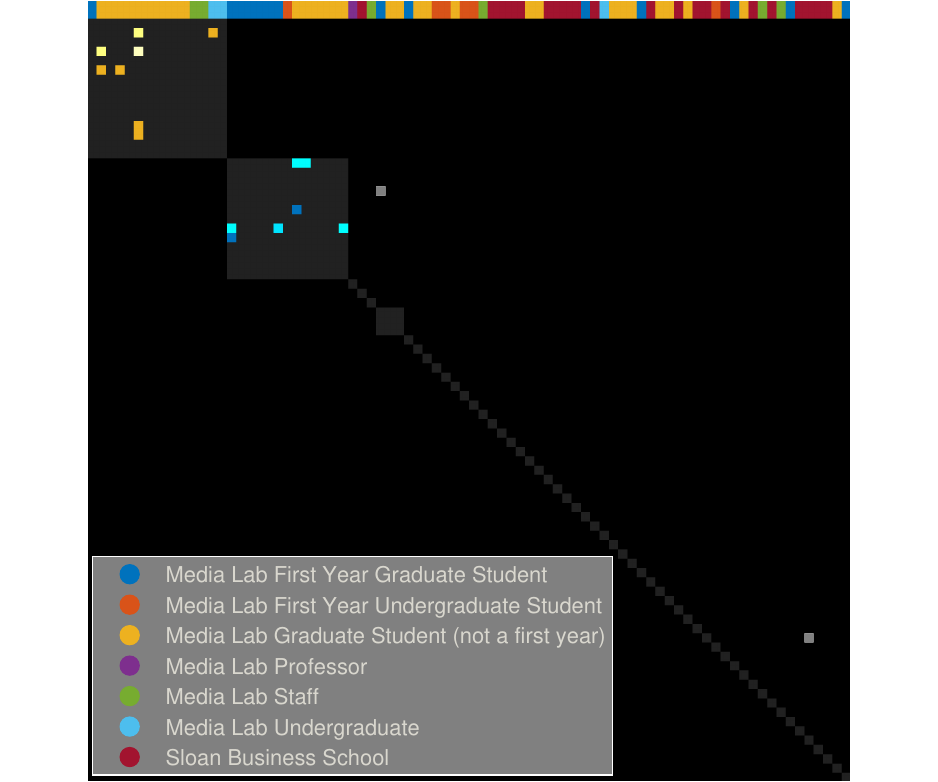}
        }%
	\null
    \caption{Clustering of hours of day and people by \method{} for the
    reality mining dataset. The right-hand colored bar in (a) shows the hour
    clusters while (b)-(c) show the adjacency matrices of a selected hour
    from the day and night clusters, respectively. The remaining color bars
    in (a)-(c) serve as ground truth labels.}
    \label{fig:real_mine}
\end{figure*}

\subsubsection{Execution Time on Artificial Data.}
\label{sec:exec_time_setup}
The quantities we will study are the number of nodes, $I$, the number of
views, $K$, the node embeddings dimension, $R$, and view embeddings
dimension, $M$. When varying $I$, we set $K$, $R$ and $M$ to 9, 3 and 3,
respectively. When varying $K$, we set $I$, $R$ and $M$ to 60, 3 and 3,
respectively. When varying $R$, we set $I$, $K$, and $M$ to 240, 9 and 3,
respectively. When varying $M$ we set $I$, $K$ and $R$ to 120, 9 and 96,
respectively. We also understand, that some could argue that these numbers
should have been larger for a proper time complexity analysis. However,
doing so would force us to completely exclude CMNC from the comparisons,
since it would very quickly produce out-of-memory errors. In fact, in our
experiments, this issue sometimes occured even with the aforementioned
experiment setup.

The graph generation process is similar to the one in
\autoref{sec:artif_clust_perf}, but here the intra-community edge
density, $\gamma$, is always fixed to 0.15. When altering the number of
nodes or views, the sizes of node clusters and view clusters are scaled
proportionally. Similarly, the parameters of the embedding generation is
parametrized in the same way as in
\autoref{sec:artif_clust_perf_appendix}, but with a few modifications.
First, we consider three equally spaced values instead of six for ComClus
for both its $\beta$ and $\rho$. Second, we consider a fixed threshold of
$1e-6$ for all methods. And third, we calculate 5 samples for each parameter
combination instead of 100.

\subsection{Real-World Case Study 2.}
Here we study the well known reality mining dataset \cite{eagle2006reality}
which documented the interactions of a group of students and faculty from
the MIT Media Laboratory and MIT Sloan business school via special software
on their phones. In our experiments, we aggregate all communications between
82 participants on an hourly basis and for a total of 15 days, which leads
to a time-evolving graph with an adjacency tensor of size $82\times 82\times
360$.

\autoref{fig:real_mine_views} indicates that the communication patterns of
the participants during the day have a slight but observable tendency to be
different from the communication patterns at night.  Specifically,
\autoref{fig:real_mine_nodes_2} shows that during the day there exist 2
major clusters consisting mostly of Media Lab Graduate students and 2
clusters of Sloan Business School students. On the other hand, from
\autoref{fig:real_mine_nodes_1} we observe that during the night the
clusters of Media Lab students shrink in size, while the clusters of Sloan
Business School students vanish altogether. This is in accordance with our
intuition that during the night the intensity communications is expected to
be lower.


\appendixcontent
\printbibliography

@ARTICLE{nickel2015review,
  author={Nickel, Maximilian and Murphy, Kevin and Tresp, Volker and Gabrilovich, Evgeniy},
  journal={Proceedings of the IEEE}, 
  title={A Review of Relational Machine Learning for Knowledge Graphs}, 
  year={2016},
  volume={104},
  number={1},
  pages={11-33},
  keywords={Knowledge based systems;Predictive models;Machine learning;Computer graphs;Biological system modeling;Data mining;Resource description framework;Statistical analysis;Big data;Graph-based models;knowledge extraction;knowledge graphs;latent feature models;statistical relational learning;Graph-based models;knowledge extraction;knowledge graphs;latent feature models;statistical relational learning},
  NO_doi={10.1109/JPROC.2015.2483592}}

@article{ni2017comclus,
  title={ComClus: A self-grouping framework for multi-network clustering},
  author={Ni, Jingchao and Cheng, Wei and Fan, Wei and Zhang, Xiang},
  journal={IEEE transactions on knowledge and data engineering},
  volume={30},
  number={3},
  pages={435--448},
  year={2017},
  publisher={IEEE}
}

@inproceedings{gujral2020beyond,
  title={Beyond rank-1: Discovering rich community structure in multi-aspect graphs},
  author={Gujral, Ekta and Pasricha, Ravdeep and Papalexakis, Evangelos},
  booktitle={Proceedings of The Web Conference 2020},
  pages={452--462},
  year={2020}
}

@article{boyd2011distributed,
  title={Distributed optimization and statistical learning via the alternating direction method of multipliers},
  author={Boyd, Stephen and Parikh, Neal and Chu, Eric and Peleato, Borja and Eckstein, Jonathan and others},
  journal={Foundations and Trends{\textregistered} in Machine learning},
  volume={3},
  number={1},
  pages={1--122},
  year={2011},
  publisher={Now Publishers, Inc.}
}

@article{huang2016flexible,
  title={A flexible and efficient algorithmic framework for constrained matrix and tensor factorization},
  author={Huang, Kejun and Sidiropoulos, Nicholas D and Liavas, Athanasios P},
  journal={IEEE Transactions on Signal Processing},
  volume={64},
  number={19},
  pages={5052--5065},
  year={2016},
  publisher={IEEE}
}

@article{liu2012multiview,
  title={Multiview partitioning via tensor methods},
  author={Liu, Xinhai and Ji, Shuiwang and Gl{\"a}nzel, Wolfgang and De Moor, Bart},
  journal={IEEE Transactions on Knowledge and Data Engineering},
  volume={25},
  number={5},
  pages={1056--1069},
  year={2012},
  publisher={IEEE}
}

@article{seung2001algorithms,
  title={Algorithms for non-negative matrix factorization},
  author={Seung, D and Lee, L},
  journal={Advances in neural information processing systems},
  volume={13},
  pages={556--562},
  year={2001}
}

@techreport{kolda2006multilinear,
  author       = {Kolda, Tamara Gibson},
  title        = {Multilinear operators for higher-order decompositions.},
  institution  = {Sandia National Laboratories (SNL), Albuquerque, NM, and Livermore, CA (United States)},
  annote       = {We propose two new multilinear operators for expressing the matrix compositions that are needed in the Tucker and PARAFAC (CANDECOMP) decompositions. The first operator, which we call the Tucker operator, is shorthand for performing an n-mode matrix multiplication for every mode of a given tensor and can be employed to concisely express the Tucker decomposition. The second operator, which we call the Kruskal operator, is shorthand for the sum of the outer-products of the columns of N matrices and allows a divorce from a matricized representation and a very concise expression of the PARAFAC decomposition. We explore the properties of the Tucker and Kruskal operators independently of the related decompositions. Additionally, we provide a review of the matrix and tensor operations that are frequently used in the context of tensor decompositions.},
  NO_doi          = {10.2172/923081},
  NO_url          = {https://www.osti.gov/biblio/923081},
  place        = {United States},
  year         = {2006},
  NO_month        = {04}}

@article{wright2015coordinate,
  title={Coordinate descent algorithms},
  author={Wright, Stephen J},
  journal={Mathematical Programming},
  volume={151},
  number={1},
  pages={3--34},
  year={2015},
  publisher={Springer}
}

@inproceedings{theobald1975inequality,
  title={An inequality for the trace of the product of two symmetric matrices},
  author={Theobald, CM},
  booktitle={Mathematical Proceedings of the Cambridge Philosophical Society},
  volume={77},
  number={2},
  pages={265--267},
  year={1975},
  organization={Cambridge University Press}
}

@article{ng2001spectral,
  title={On spectral clustering: Analysis and an algorithm},
  author={Ng, Andrew and Jordan, Michael and Weiss, Yair},
  journal={Advances in neural information processing systems},
  volume={14},
  year={2001}
}

@misc{mtimesx,
  author = {James Tursa},
  year = {2022}, 
  title = {{ MTIMESX - Fast Matrix Multiply with Multi-Dimensional Support}},
  howpublished = "\url{https://www.mathworks.com/matlabcentral/fileexchange/25977-mtimesx-fast-matrix-multiply-with-multi-dimensional-support}",
  note = "MATLAB Central File Exchange. Retrieved October 3, 2022"
}

@article{von2007tutorial,
  title={A tutorial on spectral clustering},
  author={Von Luxburg, Ulrike},
  journal={Statistics and computing},
  volume={17},
  number={4},
  pages={395--416},
  year={2007},
  publisher={Springer}
}

@article{klus2022koopman,
  title={Koopman-based spectral clustering of directed and time-evolving graphs},
  author={Klus, Stefan and Conrad, Natasa Djurdjevac},
  journal={arXiv preprint arXiv:2204.02951},
  year={2022}
}

@inproceedings{zhou2005learning,
  title={Learning from labeled and unlabeled data on a directed graph},
  author={Zhou, Dengyong and Huang, Jiayuan and Sch{\"o}lkopf, Bernhard},
  booktitle={Proceedings of the 22nd international conference on Machine learning},
  pages={1036--1043},
  year={2005}
}

@article{eckart1936approximation,
  title={The approximation of one matrix by another of lower rank},
  author={Eckart, Carl and Young, Gale},
  journal={Psychometrika},
  volume={1},
  number={3},
  pages={211--218},
  year={1936},
  publisher={Springer}
}

@inproceedings{zhou2007spectral,
  title={Spectral clustering and transductive learning with multiple views},
  author={Zhou, Dengyong and Burges, Christopher JC},
  booktitle={Proceedings of the 24th international conference on Machine learning},
  pages={1159--1166},
  year={2007}
}

@article{grimes1994shifted,
  title={A shifted block Lanczos algorithm for solving sparse symmetric generalized eigenproblems},
  author={Grimes, Roger G and Lewis, John G and Simon, Horst D},
  journal={SIAM Journal on Matrix Analysis and Applications},
  volume={15},
  number={1},
  pages={228--272},
  year={1994},
  publisher={SIAM}
}

@inproceedings{chen2019tensor,
  title={Tensor decomposition for multilayer networks clustering},
  author={Chen, Zitai and Chen, Chuan and Zheng, Zibin and Zhu, Yi},
  booktitle={Proceedings of the AAAI Conference on Artificial Intelligence},
  volume={33},
  number={01},
  pages={3371--3378},
  year={2019}
}

@article{de2008decompositions,
  title={Decompositions of a higher-order tensor in block terms—Part II: Definitions and uniqueness},
  author={De Lathauwer, Lieven},
  journal={SIAM Journal on Matrix Analysis and Applications},
  volume={30},
  number={3},
  pages={1033--1066},
  year={2008},
  publisher={SIAM}
}

@article{nocedal2006numerical,
  title={Numerical optimization},
  author={Nocedal, J and Wright, SJ},
  journal={Springer Series in Operations Research and Financial Engineering},
  year={2006}
}

@inproceedings{kumar2011co,
 author = {Kumar, Abhishek and Rai, Piyush and Daume, Hal},
 booktitle = {Advances in Neural Information Processing Systems},
 editor = {J. Shawe-Taylor and R. Zemel and P. Bartlett and F. Pereira and K.Q. Weinberger},
 pages = {},
 publisher = {Curran Associates, Inc.},
 title = {Co-regularized multi-view spectral clustering},
 NO_url = {https://proceedings.neurips.cc/paper_files/paper/2011/file/31839b036f63806cba3f47b93af8ccb5-Paper.pdf},
 volume = {24},
 year = {2011}
}

@article{eagle2006reality,
  title={Reality mining: sensing complex social systems},
  author={Eagle, Nathan and Pentland, Alex (Sandy)},
  journal={Personal and ubiquitous computing},
  volume={10},
  number={4},
  pages={255--268},
  year={2006},
  NO_month={March},
  publisher={Springer}
}

@misc{openflights_dataset,
  author = {{OpenFlights}},
  title = {OpenFlights Data},
  url = {https://openflights.org/data},
  note = {Accessed: May 16, 2024}
}

@article{pattillo2013maximum,
  title={On the maximum quasi-clique problem},
  author={Pattillo, Jeffrey and Veremyev, Alexander and Butenko, Sergiy and Boginski, Vladimir},
  journal={Discrete Applied Mathematics},
  volume={161},
  number={1-2},
  pages={244--257},
  year={2013},
  publisher={Elsevier}
}

@article{zhong2014evolution,
  title={The evolution of communities in the international oil trade network},
  author={Zhong, Weiqiong and An, Haizhong and Gao, Xiangyun and Sun, Xiaoqi},
  journal={Physica A: Statistical Mechanics and its Applications},
  volume={413},
  pages={42--52},
  year={2014},
  publisher={Elsevier}
}

@article{magnani2021community,
  title={Community detection in multiplex networks},
  author={Magnani, Matteo and Hanteer, Obaida and Interdonato, Roberto and Rossi, Luca and Tagarelli, Andrea},
  journal={ACM Computing Surveys (CSUR)},
  volume={54},
  number={3},
  pages={1--35},
  year={2021},
  publisher={ACM New York, NY, USA}
}

@inproceedings{multi-graph-explorer,
  title={Multi-Graph Explorer: A framework for Advanced Multi-Graph Analysis and Method Development},
  author={Tsitsikas, Yorgos and Papalexakis, Evangelos E.},
  booktitle={Proceedings of the 33rd ACM International Conference on Information and Knowledge Management},
  year={2024},
  NO_month={October},
  NO_doi={10.1145/3627673.3679213},
  NO_isbn={979-8-4007-0436-9/24/10}
}

@misc{multi-graph-explorer-github,
  title={Multi-Graph Explorer: A framework for Advanced Multi-Graph Analysis and Method Development},
  author={Tsitsikas, Yorgos and Papalexakis, Evangelos E.},
  year={2024},
  NO_month={October},
  howpublished = "\url{https://github.com/gtsitsik/multi-graph-explorer}",
  note="GitHub repository"
}

@inproceedings{papalexakis2013more,
  title={Do more views of a graph help? community detection and clustering in multi-graphs},
  author={Papalexakis, Evangelos E and Akoglu, Leman and Ienco, Dino},
  booktitle={Proceedings of the 16th International Conference on Information Fusion},
  pages={899--905},
  year={2013},
  organization={IEEE}
}

@article{carroll1970analysis,
  title={Analysis of individual differences in multidimensional scaling via an N-way generalization of ``Eckart-Young'' decomposition},
  author={Carroll, J. Douglas and Chang, Jih-Jie},
  journal={Psychometrika},
  volume={35},
  number={3},
  pages={283--319},
  year={1970},
  publisher={Springer}
}

@techreport{PARAFAC,
  author       = {Harshman, Richard A.},
  title        = {Foundations of the PARAFAC procedure: Models and conditions for an  ``explanatory''  multimodal factor analysis},
  year         = {1970},
  institution  = {UCLA Working Papers in Phonetics},
  number       = {16},
  pages        = {1--84},
  NO_note         = {University Microfilms, Ann Arbor, Michigan, No. 10,085. Originally published in 1970 and reproduced for accessibility to scholars.}
}

@article{lloyd1982least,
  title={Least squares quantization in PCM},
  author={Lloyd, Stuart},
  journal={IEEE transactions on information theory},
  volume={28},
  number={2},
  pages={129--137},
  year={1982},
  publisher={IEEE}
}
\end{refsection}
\end{document}
